\newtheorem{thm}{Theorem}
\newtheorem*{remark}{\it Remark}
\newtheorem*{definition}{Definition}
\begin{document}

\title{LRTuckerRep: Low-rank Tucker Representation Model for Multi-dimensional Data Completion} %\thanks{This work was supported by SUSTech Presidential Postdoctoral Fellowship and the China Postdoctoral Science Foundation}
\author{Wenwu Gong and Lili Yang

\thanks{Wenwu Gong (ORCID: 0000-0002-8019-0582) is with the Department of Statistics and Data Science, Southern University of Science and Technology, Shenzhen, 518055, China.}

\thanks{Lili Yang is with the Department of Statistics and Data Science, Southern University of Science and Technology, Shenzhen, 518055, China.}
} 

\maketitle

%% Abstract
\begin{abstract}
Multi-dimensional data completion is a critical problem in computational sciences, particularly in domains such as computer vision and signal processing. Existing methods typically leverage either global low-rank approximations or local smoothness regularization, but each suffers from notable limitations: low-rank methods are computationally expensive and may disrupt intrinsic data structures, while smoothness-based approaches often require extensive manual parameter tuning and exhibit poor generalization. In this paper, we propose a novel \textbf{L}ow-\textbf{R}ank \textbf{Tucker} \textbf{Rep}resentation (LRTuckerRep) model that unifies global and local prior modeling within a Tucker decomposition. Specifically, LRTuckerRep encodes low rankness through a self-adaptive weighted nuclear norm on the factor matrices and a sparse Tucker core, while capturing smoothness via a parameter-free Laplacian-based regularization on the factor spaces. To efficiently solve the resulting nonconvex optimization problem, we develop two iterative algorithms with provable convergence guarantees. Extensive experiments on multi-dimensional image inpainting and traffic data imputation demonstrate that LRTuckerRep achieves superior completion accuracy and robustness under high missing rates compared to baselines.
\end{abstract}

%% Keywords
\begin{IEEEkeywords}
    Tensor completion, low-rank Tucker representation, prior modeling,  proximal linearization, successive convex approximation
\end{IEEEkeywords}

%% main text
\section{Introduction}
\label{sec: intro}
\IEEEPARstart{I}n the era of big data and artificial intelligence, multi-dimensional data with complex structures is increasingly prevalent across diverse domains, including computer vision, signal processing, and scientific computing. Tensor representations depict complex structural information from multi-dimensional data, which plays an important role in image science \cite{IP2019} and signal processing \cite{SP2017}. However, multi-dimensional data collected in practical applications suffers from degradation and information loss, affecting image enhancement quality and traffic prediction accuracy. One of the most fundamental issues is to estimate missing values due to image corruption or sensor failure, commonly known as tensor completion (TC). Liu et al. \cite{TC2009} first introduced the TC problem in their 2009 conference paper. As a subdomain of inverse problems, the TC problem refers to estimating the multi-dimensional data $\mathcal{X} \in \mathbf{R}^{I_1 \times \cdots \times I_N} $ from its partial observations $\mathcal{T} \in \mathbf{R}^{I_1 \times \cdots \times I_N} $ under the projection operator $\Omega \in \mathbf{R}^{I_1 \times \cdots \times I_N}$. From a Bayesian perspective \cite{GLNP2022}, the TC problem can generally be expressed as a posterior distribution stated in \eqref{TC} 
\begin{equation}
\begin{aligned}
\hat{\mathcal{X}} =\ \arg \min_{\mathcal{X}} \ \mathcal{R}(\mathcal{X}),\quad  \text{s.t.,} \quad \mathcal{X}_{\Omega} = \mathcal{T}_{\Omega}, 
\end{aligned}\label{TC}
\end{equation}
where the prior $\mathcal{R}(\mathcal{X}) $ represents the intrinsic property of data, and the constraint equation enforces that $\mathcal{X}$ is consistent with $\mathcal{T}$ under the observed index $\Omega$. Real-world multi-dimensional data are inherently rich in structural redundancy, often characterized by strong global correlations and local similarities \cite{LS2023}. These two complementary properties not only reflect the intrinsic patterns underlying the data but also offer crucial inductive biases that can be exploited to improve the performance of TC. As such, effectively modeling global and local structures has become a central motivation in developing TC methods. 

The low rankness reveals the global correlations among tensors, which is the main focus of the prior modeling for TC problems. Liu et al. \cite{HaLRTC2013} introduced a method that uses the summation of nuclear norms of tensor unfolding matrices for the TC problem. Although this approach offers robust theoretical guarantees for completion, it is computationally expensive. In contrast, parallel methods \cite{Tmac2015, TT2017} used fixed low-rank matrix factorization and inexact optimization techniques to reduce computational complexity. However, these unfolding matrix factorization methods may disrupt the tensor structure and perform poorly when faced with highly corrupted tensors.

An alternative approach is the low-rank tensor decomposition method, which preserves the tensor structure and avoids high computational costs. Zhao et al. \cite{BayesianCP2022} developed a sparsity-induced low-rank CANDECOMP/PARAFAC (CP) decomposition that automatically determines the tensor rank. Li et al. \cite{tSVDTR2022} introduced a novel low-rank Tubal factorization that utilizes a nonlinear transformation to improve the tensor approximation. Ji et al. \cite{LogTucker2017} developed a nonconvex Tucker rank approximation for the low-rank tensor completion problem. Xie et al. \cite{KBR2018} proposed a Tucker-based nonconvex relaxation method known as Kronecker-Basis-Representation to improve TC precision. Unlike the existing Tucker-based methods, which primarily focus on achieving low-rank representations through unfolding matrices, our method introduces a novel low-rank Tucker representation through Tucker core sparsity combined with weighted factor matrix nuclear norm. Moreover, previous approaches often overlook the importance of preserving local similarities within the data, thereby limiting their performance in real-world tensor data completion.

Local similarities are typically characterized by smoothness, such as tensor gradient \cite{LATC2021, LS2023} and factor gradient \cite{Auxiliary2012, ARTD2023}. Recent research has made significant progress in joint global and local priors modeling, including nuclear norm-based \cite{SNNTV2017}, matrix factorization-based \cite{STMac2016}, and tensor decomposition-based \cite{SPC2016, TTTV2019, TubalLSTC2021}. Additionally, several smooth Tucker models have investigated the joint priors of low rankness and smoothness \cite{ESP2020, SBCD2022, ARTD2023}. The Laplacian-based factor gradient regularization proposed by \cite{ESP2020} represents a significant advancement in capturing local structure within Tucker factor spaces. However, this approach relies on manually tuned regularization parameters that are sensitive to specific datasets and application scenarios, limiting its generalizability. Moreover, most existing models struggle to effectively balance low-rank representation with the preservation of intrinsic smoothness in Tucker components, thus failing to fully leverage the benefits of joint prior optimization. For an illustrative example, Fig.~\ref{fig0} shows the inpainting of the RGB image. 

\begin{figure}[htbp]
  \centering
  \includegraphics[width=1\linewidth]{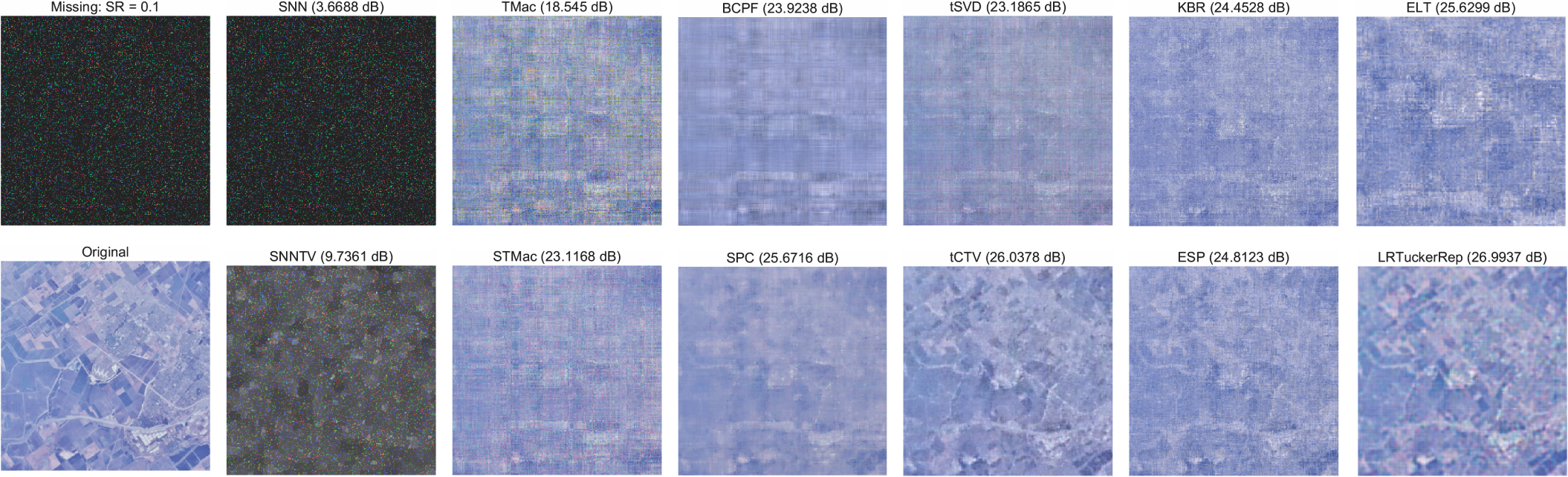}
  \caption{Model performance of joint low rankness and smoothness in the inpainting task with sample ratio (SR) is 10\%. The recovery results under low rankness prior: SNN \cite{HaLRTC2013}, TMac \cite{Tmac2015}, BCPF \cite{BGCP2015}, tSVD \cite{tSVD2017}, KBR \cite{KBR2018}, our Low-rank Tucker representation; Recovery obtained by several methods built under joint low rankness and smoothness priors: SNNTV \cite{SNNTV2017}, STMac \cite{STMac2016}, SPC \cite{SPC2016}, tCTV \cite{LS2023}, ESP \cite{ESP2020}, and our proposal. It can be seen that smoothness enhances recovery performance, and our proposal performs better.}
  \label{fig0}
\end{figure}

In addition to developing tensor optimization models, it is crucial to design efficient solving algorithms. Tucker-based TC models are generally nonconvex, and the prior structures chosen are often nonsmooth. This complexity makes it challenging to analyze the convergence of multiblock nonconvex optimization problems, and existing algorithms typically result in high computational complexity. Notable solving algorithms include proximal alternative minimization (PAM) \cite{PAM2015} and the proximal alternative direction method of multipliers (PADMM) \cite{ProADMM2015} have shown effectiveness and efficiency in nonconvex optimization, particularly when approximation strategies are used \cite{BCDXu2013, PALM2014}. However, a lack of systematic analysis of tensor completion algorithms that leverage Tucker decomposition while ensuring convergence remains.

This work aims to tackle three fundamental challenges in the context of Tucker-based tensor modeling for multi-dimensional data completion: 1) Existing methods primarily achieve low-rank representations through matrix unfolding operations. It is crucial to develop a novel Tucker-based low-rank representation framework that effectively exploits the high-order dependencies while maintaining the native structural integrity of the data. 2) To capture local smoothness within Tucker components remains nontrivial, as current methods frequently rely on manually tuned regularization parameters that are sensitive to datasets and task configurations. Therefore, a critical objective is to design an adaptive smoothness characterization mechanism that requires no hyperparameter tuning, thereby ensuring more stable and generalizable data completion performance. 3) Tucker-based models typically lead to multi-block nonconvex optimization problems, further complicated by the incorporation of structural priors. Consequently, there is a critical need to develop efficient and provably convergent optimization algorithms that can handle the model’s nonconvexity while preserving computational scalability and accuracy.

This paper addresses the joint prior modeling of low rankness and smoothness in Tucker-based tensor completion problems. The main contributions are summarized as follows.
\begin{enumerate}
    \item \textit{Low-rank Tucker measure:} Inspired by \cite{KBR2018}, we use the weighted factor matrix nuclear norm and the Tucker core $l_1$ norm to characterize the low Tucker rank. The low-rank Tucker measure solves the imbalance of unfolding tensor matrices and offers a novel interpretation of low rankness based on tensor sparsity. Furthermore, the nuclear norm weights are self-adaptive, and a trade-off parameter is established to balance the roles of low rankness and sparsity.
    \item \textit{Smoothness:} We improve tensor completion performance by capturing a smooth structure through the factor gradient, which employs a parameter-tuning-free Laplacian regularization on the factor matrix to characterize smoothness.
    \item \textit{Efficient optimization:} The Proximal Alternating Linearized Minimization (PALM) and Proximal Alternating Direction Multiplier (ProADM) algorithms are proposed for solving the proposed model. Numerically, these algorithms are designed to be single-looped, which makes programming very easy. Theoretically, both the PALM and ProADM algorithms are shown to converge to a critical point globally.
    \item \textit{Numerical results:} Extensive experiments evaluate the performance of our proposal in image data inpainting and traffic data imputation. The numerical results demonstrate that the proposed model exhibits strong generalization capabilities and outperforms existing baselines.
\end{enumerate}

We first briefly review some works on the Tucker-based tensor completion models in Sect.~\ref{sec: RW}. In Sect.~\ref{sec: main}, we present our main results, including the proposed low-rank Tucker representation model, two solving algorithms, and convergence results. In Sect.~\ref{sec: experiments}, we provide numerical results of our proposal and compare them with baseline methods for image data inpainting and traffic data imputation. Sect.~\ref{sec: conclusions} presents conclusions and future work.

\section{Related works}
\label{sec: RW}
Numerous studies have investigated the Tucker-based approaches \cite{SBCD2022, ARTD2023} for the tensor completion (TC) problem. In this section, we review several related Tucker-based TC models that incorporate joint low rankness and smoothness, focusing on applications in multi-dimensional image inpainting and traffic imputation. The key attributes of various Tucker-based TC methods are summarized in Table~\ref{tab1}.

\begin{table*}[htbp]
  \footnotesize
  \centering
  \caption{Some existing Tucker-based TC methods utilizing different priors} \label{tab1}
  \label{RW}
  \begin{tabular}{@{}lccccccc@{}} 
    \toprule
    \multirow{2}{*}{\textbf{TC methods}}
    & \multicolumn{2}{c}{Low rankness} & \multicolumn{2}{c}{Smoothness} \\ 
    & nuclear norm & sparsity & tensor gradient & factor gradient\\
    \midrule
    Ours & \checkmark & \checkmark & & \checkmark \\
    LRSETD \cite{LRSETD2024} & \checkmark & \checkmark & \checkmark & \\
    SparsityTD \cite{ARTD2023} &  & \checkmark & & \checkmark\\
    LSMTLT \cite{BayesianTucker2022} &  & \checkmark & \checkmark & \\
    SBCD \cite{SBCD2022} & \checkmark & \checkmark & & \checkmark \\
    ESP \cite{ESP2020} & \checkmark & &  & \checkmark \\
    gHOI \cite{gHOI2016} & \checkmark & & \checkmark & \\
    STDC \cite{STDC2014} & \checkmark &  &  & \checkmark \\
    \bottomrule
  \end{tabular}
  \begin{threeparttable}
    \checkmark denotes the mentioned priors that have been considered
  \end{threeparttable}
\end{table*}

Chen et al. \cite{STDC2014} pioneered a smooth graph-regularized structure combined with low-rank Tucker decomposition for image data recovery. Although the TC accuracy has improved significantly compared to earlier methods, it often necessitates extensive hyperparameter tuning. Liu et al. \cite{gHOI2016} applied low rankness to unfolding the core tensor while ensuring smoothness through the constraint of orthogonal factor matrices, reducing computational complexity. Li et al. \cite{SNNTV2017} integrated the spatial tensor gradient into the Tucker model to exploit the piecewise smooth structure along the spatial dimensions of the visual data, demonstrating the effectiveness of inpainting tasks. Xue et al. \cite{ESP2020} utilized a tensor sparsity measure to encode the low-rank property and applied the $l_2$ norm factor gradient to capture local properties, enhancing tensor completion performance. Pan et al. \cite{LRSETD2024} proposed combining a Tucker sparsity term with a tensor gradient to improve both the inpainting and the traffic imputation performance. 

Most existing Tucker-based methods require the tensor rank to be predefined and often lack a clear interpretation of the low rankness. The recent low Tucker rank model \cite{SBCD2022} introduced a novel approach that does not require predefined ranks, while Xue et al. \cite{BayesianTucker2022} proposed a Bayesian three-layer transform structure to measure Tucker sparsity and improve TC performance. Furthermore, Gong et al. \cite{ARTD2023} proposed a sparsity-based Tucker decomposition model for the restoration of color images and the imputation of spatiotemporal traffic data, focusing on tensor sparsity under the full Tucker rank. However, coupling low rankness and smoothness priors leads to redundant information, complicating hyperparameter tuning. A novel low-rank Tucker representation model that integrates smoothness without requiring parameter tuning remains an open study area.

\section{Tucker-based prior modeling}
\label{sec: main}

\subsection{Notations and preliminaries}
Given a tensor ${\mathcal{X} \in \mathbb{R}^{I_1 \times I_2 \times \cdots \times I_N}}$, it can be decomposed into a core tensor $\mathcal{G}\in \mathbb{R}^{I_1 \times I_2 \times \cdots \times I_N}$ multiplying a matrix $\mathbf{U}_{n} \in \mathbb{R}^{I_n \times I_n}$ along each mode, that is, $\mathcal{X} = \mathcal{G} \times_{n=1}^{N} {\mathbf{U}_{n}}$. In addition, the matrix formulation is shown as $\mathbf{X}_{(n)} =  \mathbf{U}_{n} \mathbf{G}_{(n)} \mathbf{V}_{n}^{\mathrm{T}}$, where $\mathbf{V}_{n} = \left(\mathbf{U}_{N} \otimes \cdots \otimes \mathbf{U}_{n+1} \otimes \mathbf{U}_{n-1} \otimes \cdots \otimes \mathbf{U}_{1}\right)$ and the superscript `$\mathrm{T}$' represent the matrix transpose. It is easy to verify that $\operatorname{vec}(\mathcal{X}) = \left(\mathbf{U}_{N} \otimes \cdots \otimes \mathbf{U}_{n} \otimes \cdots \otimes \mathbf{U}_{1}\right) \operatorname{vec}({\mathcal{G})} = \otimes_{n=N}^1 \mathbf{U}_{n} \operatorname{vec}({\mathcal{G})}$, and 
Table~\ref{tab2} presents all notations used in this paper.

\begin{table}[ht]
    \caption{Notations} \label{tab2}
    \centering
    ${
        \begin{array}{r|l}
            \hline \hline 
            \mathcal{X}, \mathbf{U}, \alpha  &  \text{A tensor, matrix and real value, respectively.} \\
            \Omega, \bar{\Omega} & \text {Observed index set and its complement.} \\
            {\mathcal{S}_{\eta}(x)} & \text{Shrinkage operator with} \ \eta \text{ in component-wise}. \\ 
            {\mathcal{D}_{\eta}(\mathbf{U})} & \text{Singular value decomposition (SVD) shrinkage of matrix } \mathbf{U}. \\
            \mathcal{X}_{\Omega} & \text{Observed entries supported on the observed index}. \\
            \times_n & \text {Mode-n product.} \\
            \otimes & \text {Kronecker product.} \\
            \operatorname{tr} & \text {Trace operator.} \\
            \sigma_{j}(\mathbf{U}) & \text {the $j$th largest singular value of} \ \mathbf{U}. \\
            \left\| \cdot \right\|_F^2 & \text {Frobenius norm.} \\
            \left\| \cdot \right\|_{\ast} & \text {Nuclear norm.} \\
            \left\| \cdot \right\|_{1} & L_1 \text{norm of tensor vectorization.} \\
            \left\| \cdot \right\|_2 & \text {Spectral norm.} \\
            \mathbf{X}_{(n)}  & \text {Mode-n unfolding of tensor} \ \mathcal{X}. \\
            \hline \hline
        \end{array}
    }$
\end{table}

\begin{definition}
     Let ${f: \mathbb{R}^{d} \rightarrow(-\infty, \infty]}$ be a proper and lower semicontinuous function. Given ${\tilde{x} \in \mathbb{R}^{d}}$, ${L>0}$, the proximal operator associated with ${f}$ is defined by:
        \begin{equation}
        \operatorname{prox}_{L}^{f}(\tilde{x}):=\operatorname{argmin} f(x)+\frac{L}{2}\|x-\tilde{x}\|^{2}, \  x \in \mathbb{R}^{d}. \label{proximal}
        \end{equation}
\end{definition}

\begin{remark} 
    The mapping ${\operatorname{prox}_{L}^{f}(\cdot)}$ only depends on ${f}$ and has a closed form in many applications. Let $\mathcal{D}_{\tau}(\cdot)$ and $\mathcal{S}_{\tau}(\cdot)$ represent the singular value shrinkage \cite{SVT2013} and the soft-thresholding operator \cite{SNTD2015}, respectively. We have the nuclear norm minimization
    \begin{equation}
        \operatorname{prox}_{1/\tau}^{\|\cdot\|_*}(\tilde{\mathbf{U}}) = \underset{\mathbf{U}}{\operatorname{argmin}} \ \frac{1}{2}\|\mathbf{U}-\tilde{\mathbf{U}}\|_{F}^{2}+\tau\|\mathbf{U}\|_{*} = \mathcal{D}_\tau(\tilde{\mathbf{U}}), \label{DO}
    \end{equation}
    and LASSO problem
    \begin{equation}
        \operatorname{prox}_{1/\tau}^{\|\cdot\|_1}(\tilde{\mathcal{G}}) = \underset{\mathcal{G}}{\operatorname{argmin}} \ \frac{1}{2}\|\mathcal{G}-\tilde{\mathcal{G}}\|_{F}^{2}+\tau\|\mathcal{G}\|_{1} = \mathcal{S}_\tau(\tilde{\mathcal{G}}). \label{SO}
    \end{equation}
\end{remark}

\begin{definition}
    (i) A set \( \mathcal{D} \subset \mathbb{R}^{n} \) is called semianalytic \cite{KL2013} if it can be represented as
    \begin{equation*}
    \mathcal{D} = \bigcup_{i=1}^{s} \bigcap_{j=1}^{t}\left\{\mathbf{x} \in \mathbb{R}^{n}: p_{i j}(\mathbf{x})=0, q_{i j}(\mathbf{x})>0\right\},
    \end{equation*}
    where \( p_{i j}, q_{i j} \) are real real-analytic functions for \( 1 \leq i \leq s, 1 \leq j \leq t \). (ii) The set \( A \) is called subanalytic if each point in \( \mathbb{R}^{n} \) admits a neighborhood \( B \) and
    \begin{equation*}
    A \cap B =\left\{x \in \mathbb{R}^{n}:(x, y) \in \mathcal{D} \right\},
    \end{equation*}
    where \( \mathcal{D} \) is a semianalytic subset bounded by \( \mathbb{R}^{n} \times \mathbb{R}^{m} \) for some \( m \geq 1 \). (iii) A function \( \Phi \) is called subanalytic if its graph \( \operatorname{Gr}(\Phi) = \{(\mathbf{x}, \Phi(\mathbf{x})): \mathbf{x} \in \operatorname{dom}(\Phi)\} \) is a subanalytic set.
\end{definition}

\begin{definition}
    Let ${\Phi: \mathbb{R}^{d} \rightarrow(-\infty, +\infty]}$ be proper and lower semicontinuous, ${\varphi \in C^{1}:[0, \eta) \rightarrow \mathbb{R}_{+} }$, ${\eta \in(0,+\infty]}$ be a concave function.

    \noindent (i) The function ${\Phi}$ has the Kurdyka-Łojasiewicz (KL) property \cite{KL2013} at $\hat{x}$ if there exists a neighborhood ${\epsilon_x}$ of ${\hat{x} \in \left\{x \in \mathbb{R}^{d}: \partial \Phi(x) \neq \emptyset\right\}}$, such that $\varphi{\prime}(\Phi(x)-\Phi(\hat{x})) \operatorname{dist}(0, \partial \Phi(x)) \geq 1$ for all $x \in \{\epsilon_x \cap[\Phi(\hat{x})<\Phi(x)<\Phi(\hat{x})+\eta]\}$ holds;

    \noindent (ii) If ${\Phi}$ satisfies the KL property at each point of ${\partial \Phi}$ then ${\Phi}$ is called a KL function.
\end{definition}

\begin{remark} 
    If ${\Phi}$ is subanalytic or locally strongly convex, it satisfies the KL property at any point of ${\partial \Phi}$, named the KL function \cite{KL2013}.
\end{remark}

\subsection{Low-rank Tucker representation} 

\subsubsection{Motivations} The motivations of the model \eqref{Model} are summarized below. As illustrated in Fig.~\ref{framework} (a-2, b-2, c-2), many elements of the Tucker core are zero, and factor matrices exhibit low-rank structures. On the one hand, the $l_1$ norm can effectively represent the sparsity \cite{ARTD2023}. On the other hand, minimizing the nuclear norm of the factor matrix in Tucker decomposition is equivalent to optimizing the nuclear norm of the unfolding matrix \cite{SBCD2022}. Therefore, incorporating a sparse Tucker core and low-rank factor matrices is advantageous in depicting the low Tucker rank. Furthermore, Fig.~\ref{framework} (a-3, b-3, c-3) displays the statistical distributions of the mode-n matrices. The histogram demonstrates that the Laplacian measure applied to the factor matrices can effectively encode the smoothness. Analytically, our proposal expands on \cite{KBR2018, ARTD2023} using Laplacian-based factor gradients. Conceptually, our approach closely relates to \cite{BayesianTucker2022} but understands the model well using a novel low-rank structure. Table~\ref{RW} provides a detailed description of the relevant Tucker-based completion methods. 

\begin{figure}[htbp]
  \centering
  \includegraphics[width=1\linewidth]{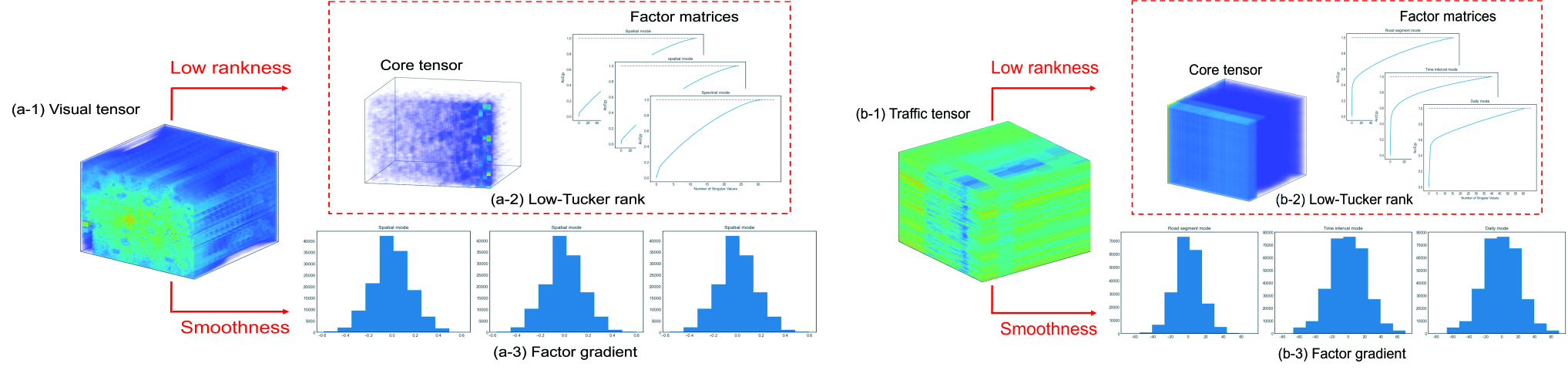}
  \caption{Visualization of low rankness and smoothness priors in the low-rank Tucker representation model.}
  \label{framework}
\end{figure}

\subsubsection{Low-Tucker rank relaxation} Considering that Tucker is an effective and efficient model to express low rankness, we use the factor matrix nuclear norm and the Tucker core $l_1$ norm relaxation, named \textbf{E}nhanced \textbf{L}ow-rank \textbf{T}ucker decomposition (ELT), to characterize low rankness.
\begin{equation}
\begin{aligned}
 \underset{\mathcal{G},\{\mathbf{U}_{n}\}, \mathcal{X}}{\operatorname{min}} & \ (1-\alpha) \prod_{n=1}^{N} \left\|\mathbf{U}_{n}\right\|_{*}+ \alpha\|\mathcal{G}\|_{1}, \\ \quad & \text { s.t.}, \ \mathcal{X} = \mathcal{G} \times_{n=1}^{N} \mathbf{U}_{n}, \ \ 0< \alpha < 1.
\end{aligned} \label{LRSTD}
\end{equation}
The nuclear norm of the factor matrix $\|\mathbf{U}_{n}\|_{*} = \sum_{j} \sigma_{j}(\mathbf{U}_{n})$ and the $l_1$ norm of Tucker core is denoted as $\|\mathcal{G}\|_{1}=\sum_{i_{1}, \ldots, i_{N}}\left|\mathcal{G}_{i_{1} \cdots i_{N}}\right|$.
\begin{remark} 
    The product function of \eqref{LRSTD} is nonconvex, which represents the block size of the Tucker core \cite{KBR2018} and is hard to solve; thus, we use the weighted factor matrix nuclear norm summation term in \eqref{Model}. 
\end{remark}

\subsubsection{Factor gradient} In recent work \cite{ESP2020, ARTD2023}, the factor gradient characterizes the local properties of the tensor. The method considers auxiliary matrices in factor matrices to find the low-dimensional representation of the tensors. To encode tensor smoothness, we propose using the Laplacian-based factor gradient. For example, we start with the unfolding tensor matrix $\mathbf{X}_{n} \in \mathbb{R}^{I_n\times \prod_{j \neq n} I_j}$ and construct the similarity matrix ${\mathbf{W}_{n} \in \mathbb{R}^{I_n\times I_n}}$ using the kernel weight $w_{i j} = e^{-\left(\left\|{x}_{i}-{x}_{j}\right\|^{2}\right)}$ for each row $x_{i} \in \mathbf{X}_{n}$. Then, the factor gradient \eqref{S} is constructed to capture an optimal low-dimensional representation $\mathbf{U}_{n}$ for $\mathbf{X}_{n}$.

\begin{equation}
   {\sum_{i=1}^{I_n} \sum_{j=1}^{I_n} w_{i j}\left\|\mathbf{u}_{i}-\mathbf{u}_{j}\right\|_{2}^{2} =  \operatorname{tr}\left(\mathbf{U}_{n}^{\mathrm{T}} {\mathbf{L}_{n}} \mathbf{U}_{n}\right)}, \ \mathbf{L}_{n} = \mathbf{D}_{n}-\mathbf{W}_{n}, \label{S}
\end{equation}
where $\mathbf{u}_{i}$ is the column vector of $\mathbf{U}_{n}^{\mathrm{T}}$ and ${\mathbf{D}_{n} \in \mathbb{R}^{I_n \times I_n}}$ is a diagonal matrix with diagonal elements ${d_{i i}=}$ ${\sum_{j=1}^{I_n} w_{i j},}$ $i=1, \ldots, I_n$. Note that $\mathbf{L}_{n}$ is a Laplacian matrix designed based on some prior knowledge, which enforces the smoothness of the low-dimensional feature $\mathbf{U}_{n}$ and captures the local properties in tensors \cite{ARTD2023}.

\begin{remark}
    The Laplacian-based matrix is constructed using data information, which reveals local properties and captures latent correlations. Consequently, the proposed factor gradient effectively represents the smooth structure of the tensors in each mode.
\end{remark}

\subsubsection{The proposed model}
Tucker components have been verified to portray the low rankness and smoothness of tensor data \cite{KBR2018, ESP2020, SBCD2022, ARTD2023}. However, these papers do not explain the Tucker low-rank structure well and lack discussion of the smoothness parameters. In this paper, we propose a low-rank Tucker representation model that simultaneously addresses low rankness and smoothness in the tensor completion problem. The proposed model offers a novel interpretation of low rankness and achieves high-precision performance without hyperparameter tuning.

Suppose $ \mathcal{G} $ is an $N$-th order tensor of the same size as $ \mathcal{X} \in \mathbf{R}^{I_1 \times \cdots \times I_N}$ and each $\mathbf{U}_{n} $ denotes an $ I_{n} \times I_{n}$ matrix. Inspired by the low-Tucker rank and factor gradient, we propose a novel \textbf{L}ow-\textbf{R}ank \textbf{Tucker} \textbf{Rep}resentation (LRTuckerRep) model \eqref{Model} to address the complementary role of the low rankness and smoothness priors in the tensor completion problem. Mathematically, the proposed LRTuckerRep model for the TC problem is formulated as 
\begin{equation}
\begin{aligned}
\underset{\mathcal{G},\{\mathbf{U}_{n}\}, \mathcal{X}}{\operatorname{min}} \ &  \left(1 - \alpha\right) \sum_{n=1}^{N} \omega_n \left\|\mathbf{U}_{n}\right\|_{*}+ \alpha \|\mathcal{G}\|_{1}  \\ & \quad + \sum_{n \in \Gamma} \frac{\beta_{n}}{2} \operatorname{tr}\left(\mathbf{U}_{n}^{\mathrm{T}} \mathbf{L}_n \mathbf{U}_{n}\right)
\\  & \text { s.t.}, \quad \mathcal{X} = \mathcal{G} \times_{n=1}^{N} \mathbf{U}_{n}, \ \  \mathcal{X}_{\Omega} = \mathcal{T}_{\Omega},
\end{aligned} \label{Model}
\end{equation}
where $0< \alpha< 1$, $\Gamma$ is a prior set determining smoothness along tensor modes, and the hyperparameters $\{\omega_n\}, \{\beta_n\}$ are self-adaptive via \eqref{hypers}
\begin{equation}
\begin{aligned}
   & \beta_{n} = \frac{\rho_n}{\sum_{n=1}^{N} \rho_n}, \, \rho_n = \frac{\sigma_1 (\mathbf{X}_{(n)})}{2*\sigma_1 (\mathbf{L}_n)}, \\ & \omega_{n}= \prod_{i=1,i \neq n}^{N} \frac{1}{R_{i}}, \, R_i=\sum_{j} \sigma_j \left(\mathbf{U}_{i}\right). 
\end{aligned}\label{hypers}
\end{equation}
For the TC problem, we update the tensor $\mathcal{X}$ by 
\begin{equation}
    \hat{\mathcal{X}} = \mathcal{T}_{\Omega} + \left(\hat{\mathcal{G}} \times_{n=1}^{N} \hat{\mathbf{U}}_{n}\right)_{\bar{\Omega}}. \label{StepX}
\end{equation}

\begin{remark}
The parameter $\beta_n$ in the proposed LRTuckerRep model governs the trade-off between low rankness and smoothness for each tensor mode and is inherently data-dependent. Specifically, low rankness is characterized via unfolding matrices, while smoothness is captured through Laplacian matrices. To adaptively balance these two priors, we set $\beta_n$ based on the ratio of the largest eigenvalues of the corresponding unfolding and Laplacian matrices. This eigenvalue-based scaling reflects the relative significance of global and local information in each mode. Numerical results demonstrate that this parameter-setting strategy leads to improved performance and stable convergence across various multi-dimensional data completion tasks.
\end{remark}

\begin{remark}
The proposed LRTuckerRep model offers a novel perspective on low Tucker rank by integrating a self-adaptive weighted nuclear norm of factor matrices with a sparse Tucker core. This formulation enables the simultaneous encoding of both low rankness and smoothness priors, leveraging the factor matrices’ ability to capture global correlations and local structural patterns. As shown in Fig.~\ref{fig0}, LRTuckerRep achieves superior performance in tensor completion tasks. Unlike prior methods such as STDC \cite{STDC2014}, gHOI \cite{gHOI2016}, ESP \cite{ESP2020}, SBCD \cite{SBCD2022}, and LRSETD \cite{LRSETD2024}, LRTuckerRep enforces low rankness and smoothness through distinct yet complementary regularizations within the Tucker decomposition. Besides, if $\alpha=1$, the proposed model degrades into a SparsityTD \cite{ARTD2023, STRTD2023} model.
\end{remark}

\subsection{Optimization algorithms}
\subsubsection{PALM-based algorithm} To solve the nonconvex optimization problem described in \eqref{Model}, we introduce $\lambda$ (an increasing sequence of positive penalty parameters) and form a multi-convex block optimization problem \eqref{PALM}, where each subproblem can be solved using a linearized proximal gradient method \cite{PALM2014, BCDXu2013}.
\begin{equation}
\begin{aligned}
\underset{\mathcal{G},\{\mathbf{U}_{n}\}, \mathcal{X}}{\operatorname{min}} & \  (1 - \alpha) \sum_{n=1}^{N} \omega_n \left\|\mathbf{U}_{n}\right\|_{*} + \alpha \|\mathcal{G}\|_{1} \\ & +\sum_{n \in \Gamma} \frac{\beta_{n}}{2} \operatorname{tr}\left(\mathbf{U}_{n}^{\mathrm{T}} \mathbf{L}_n \mathbf{U}_{n}\right) + \frac{\lambda}{2}\left\|\mathcal{G} \times_{n=1}^{N} \mathbf{U}_{n} -\mathcal{X}\right\|_{\mathrm{F}}^{2}
\label{PALM}
\end{aligned}
\end{equation}
Let $f_{n}\left(\mathbf{U}_{n}\right) = (1 - \alpha) \ \omega_n \left\|\mathbf{U}_{n}\right\|_{*}$, $f_{N+1}\left(\mathcal{G}\right) = \alpha \|\mathcal{G}\|_{1}$, and 
$H \left(\mathcal{G}, \{\mathbf{U}_{n}\}, \mathcal{X}\right) = \sum_{n \in \Gamma} \frac{\beta_{n}}{2} \operatorname{tr}\left(\mathbf{U}_{n}^{\mathrm{T}} \mathbf{L}_n \mathbf{U}_{n}\right) + \frac{\lambda}{2}\left\|\mathcal{G} \times_{n=1}^{N} \mathbf{U}_{n} -\mathcal{X}\right\|_{\mathrm{F}}^{2}$, then we have
\begin{equation}
    \begin{aligned}
    \underset{\mathcal{G},\{\mathbf{U}_{n}\}, \mathcal{X}}{\operatorname{min}} \Phi(\mathcal{G},\{\mathbf{U}_{n}\}, \mathcal{X}) & \ = \sum_{n=1}^{N} f_{n}\left(\mathbf{U}_{n}\right) + f_{N+1}\left(\mathcal{G}\right) \\
    & \qquad + H\left(\mathcal{G}, \{\mathbf{U}_{n}\}, \mathcal{X}\right), 
    \end{aligned}\label{Alg}
\end{equation}
which consists of $N+2$ blocks and can be solved iteratively by minimizing the quadratic approximation to a block-convex and differentiable function $H$ with a nonsmooth function $f$. 

\begin{remark}
    The Tucker components' constraint ensures that \eqref{PALM} is well-defined. On the one hand, the regularization term ensures the uniqueness of the Tucker component \cite{SNTD2015}. On the other hand, $\{f_{n}\}$ are convex functions, which guarantee that the proximal mapping step admits a closed-form solution. Furthermore, the multi-convex implies that the partial gradients of $H\left(\mathcal{G}, \{\mathbf{U}_{n}\}, \mathcal{X}\right)$ are Lipschitz continuous, which guarantees that the solution set is nonempty.
\end{remark} 

When given an update order $\mathcal{G}, \{\mathbf{U}_{n}\}, \mathcal{X}$ for the problem \eqref{Alg}, the proximal operators \eqref{DO} and \eqref{SO} give the following Proposition 1 and 2. 

\noindent \textbf{Proposition 1.} 
Given any bounded matrices $\{\mathbf{U}_{n}\} $ and $\mathcal{X}$, we approximate $H\left(\mathcal{G}\right)$ around the bounded extrapolated point $\tilde{\mathcal{G}}$ and find out:
 \begin{equation}
  \begin{aligned}
     \hat{\mathcal{G}} & = \underset{\mathcal{G}}{\operatorname{argmin}} f_{N+1}\left(\mathcal{G}\right) + H\left( \mathcal{G}\right) \\ & \approx \mathcal{S}_{\frac{\alpha}{L_{\mathcal{G}}}}\left(\tilde{\mathcal{G}}-\frac{1}{L_{\mathcal{G}}} \nabla_{\mathcal{G}} H\left(\tilde{\mathcal{G}}\right)\right), 
\end{aligned}\label{G}
\end{equation}
$\nabla_{\mathcal{G}} H(\mathcal{G})$ is Lipschitz continuous, and the Lipschitz constant is bounded and given by $L_\mathcal{G} = \lambda \left\|\otimes_{n=N}^{1} \mathbf{U}_{n}^{\mathrm{T}} \mathbf{U}_{n}\right\|_{2}= \lambda\prod_{n=1}^{N}\left\|\mathbf{U}_{n}^{\mathrm{T}} \mathbf{U}_{n}\right\|_{2}$.
%\begin{equation}
    %\nabla_{\mathcal{G}} H(\mathcal{G}) = \lambda \left(\mathcal{G} \times_{1} {\mathbf{U}_{1}^{\mathrm{T}}\mathbf{U}_{1}\times_2 \cdots \times_{N} \mathbf{U}_{N}^{\mathrm{T}}\mathbf{U}_{N}} - \mathcal{T} \times_{1} {\mathbf{U}_{1}^{\mathrm{T}} \times_2 \cdots \times_{N} \mathbf{U}_{N}^{\mathrm{T}}}\right). \label{dG}
%\end{equation}

\noindent \textbf{Proposition 2.} 
For given other variables under mode-$n$ unfolding, we approximate $\mathbf{U}_n$ around the extrapolated point $\tilde{\mathbf{U}}_n$ and result in 
\begin{equation}
 \begin{aligned}
     \hat{\mathbf{U}}_n & = \underset{\mathbf{U}_{n}}{\operatorname{argmin}} f_{n}\left(\mathbf{U}_n\right) + H\left(\mathbf{U}_{n}\right) \\ & \approx \mathcal{D}_{\frac{(1-\alpha)\omega_n}{L_{\mathbf{U}_n}}}\left(\tilde{\mathbf{U}}_n-\frac{1}{L_{\mathbf{U}_n}} \nabla_{\mathbf{U}_n} H\left(\tilde{\mathbf{U}}_n\right)\right), 
\end{aligned}\label{U}
\end{equation}
$\nabla_{\mathbf{U}_{n}} H(\mathbf{U}_{n})$ is Lipschitz continuous and the $L_{\mathbf{U}_{n}}$ denotes as
\begin{equation*}
L_{\mathbf{U}_{n}} = \lambda \left\|{\mathbf{G}_{(n)}} {\mathbf{V}}^{\mathrm{T}}_{n} {\mathbf{V}}\mathbf{G}_{(n)}^{\mathrm{T}}\right\|_{2} + \beta_{n} \left\| {\mathbf{L}_{n}}\right\|_{2},
\end{equation*}
%\begin{equation}
%\nabla_{\mathbf{U}_{n}} H(\mathbf{U}_{n}) = \lambda \left({\mathbf{U}_{n}} {\mathbf{G}_{(n)}} {\mathbf{V}}^{\mathrm{T}}_{n} {\mathbf{V}}\mathbf{G}_{(n)}^{\mathrm{T}} - \mathbf{X}_{(n)} {\mathbf{V}}\mathbf{G}_{(n)}^{\mathrm{T}}\right) + \beta_{n} \mathbf{L}_n{\mathbf{U}_{n}} \label{dU},
%\end{equation}
where $\mathbf{G}_{(n)}$ refers to the matrix obtained by unfolding tensor $\mathcal{G}$ along mode-$n$.

Guided by these results in Proposition 1 and Proposition 2, we explicitly employ the block coordinate descent framework to devise our proposed algorithm, Proximal Alternating Linearized Minimization (PALM). In the PALM-based solver, we adopt an initial strategy to reduce errors by randomly generating and normalizing ${\mathbf{U}_n}$. Since model \eqref{Model} is nonconvex, choosing extrapolation points is crucial in our algorithm design \cite{APG2015}. Following insights from \cite{FISTA2022}, we set the extrapolated points by combining the current and previous iterations, and employ a parameterized iterative shrinkage-thresholding scheme to enhance performance and accelerate convergence. In the $k$-th iteration, we utilize \eqref{StepG} and \eqref{StepU} as extrapolated points, both of which are accelerated using \eqref{line}. This iterative process is instrumental in advancing the convergence and efficiency of the algorithm.

\begin{equation}
 \begin{aligned}
    & \tilde{\mathcal{G}}^{k} ={\mathcal{G}^{k}}+\omega_{k}\left(\mathcal{G}^{k}-\mathcal{G}^{k-1}\right),\\ & \omega_{k} = \min \big\{\frac{t^{k-1}-1}{t^{k}}, 0.999 \sqrt{\frac{L_\mathcal{G}^{k-1}}{L_\mathcal{G}^{k}}}\big\}, \ \text{for} \ k \geq 1 
    \end{aligned}\label{StepG}
\end{equation}
\begin{equation}
 \begin{aligned}
   & \tilde{\mathbf{U}}_{n}^{k} ={\mathbf{U}_{n}^{k}}+\omega_{k}\left({\mathbf{U}_{n}^{k}}-{\mathbf{U}}_{n}^{k-1}\right),\\ & \omega_{k} = \min \big\{\frac{t^{k-1}-1}{t^{k}}, 0.999 \sqrt{\frac{L_\mathbf{U}^{k-1}}{L_\mathbf{U}^{k}}}\big\} \ \text{for} \ k \geq 1  
    \end{aligned}\label{StepU}
\end{equation}
\begin{equation}
    t^{k} =\frac{0.8+\sqrt{4 (t^{k-1})^{2}+0.8}}{2}, \  t^{0} = 1. \label{line}
\end{equation}
Furthermore, we make sure that the value of $\Phi\left(\mathcal{G}^{k}, \{\mathbf{U}^{k}_n\}, \mathcal{X}^{k}\right)$ decreases before updating the extrapolated points $\tilde{\mathcal{G}}$, $\{\tilde{\mathbf{U}}_{n}\}$. If the condition is met, we update the Tucker components using equation \eqref{Update} under \eqref{StepG} - \eqref{line} hold.
\begin{equation}
    \begin{aligned}
    &\mathcal{G}^{k+1} =  \mathcal{S}_{\frac{\alpha}{L_{\mathcal{G}}^{k}}}\left(\tilde{\mathcal{G}}^{k}-\frac{1}{L_{\mathcal{G}}^{k}} \nabla_{\mathcal{G}} H\left(\tilde{\mathcal{G}}^{k}\right)\right), \\
    &\mathbf{U}_n^{k+1} = \mathcal{D}_{\frac{(1-\alpha)\omega_n}{L^k_{\mathbf{U}_n}}}\left(\tilde{\mathbf{U}}_n^k-\frac{1}{L^k_{\mathbf{U}_n}} \nabla_{\mathbf{U}_n} H\left(\tilde{\mathbf{U}}_n^k\right)\right)
    \end{aligned}\label{Update}.
\end{equation}

\begin{remark}
    Technically, we can consider allowing a larger step size by setting $L = 1.1 *L$ to accelerate the algorithm. However, balancing this acceleration with the algorithm's stability and convergence properties is crucial, as huge step sizes might lead to divergence or other convergence issues. Adjusting the step size is often part of the fine-tuning process in algorithm optimization.
\end{remark} 

If condition ~\eqref{Stop} is satisfied when the small value $\text{tol} = 1e^{-5}$, we calculate the complete tensor $\hat{\mathcal{X}} = \mathcal{T}_{\Omega} + {\mathcal{X}^{k+1}}_{\bar{\Omega}}$ as the TC completion result. 

\begin{equation}
    \left\|(\mathcal{X}^{k+1}-\mathcal{T})_{\Omega }\right\|_{F}/\left\|{\mathcal{T}}_{\Omega }\right\|_{F}<\text{tol}, \ \text{for some} \ k. \label{Stop}
\end{equation}

Based on the above algorithmic design process, we summarize the solution procedure in Algorithm~\ref{alg1} and analyze the PALM-based algorithm convergence in Theorem~\ref{Thm1}. 

\begin{algorithm}[!ht]
	\caption{PALM-based solver for the LRTuckerRep model}
	\label{alg1} 
	\begin{algorithmic}[1]
		\STATE \textbf{Input}: Incomplete tensor $\mathcal{T}$, observed entries $\Omega$.\\
		\STATE \textbf{Output}: Completion result $\hat{\mathcal{X}}$.
		\STATE Initialize $\mathcal{G}^0, \{\mathbf{U}^0_{n}\} $ ($ 1 \leq n \leq N $) randomly, $0 < \alpha < 1$, $\lambda = 1$, $K=500$;\\
            \STATE $\mathcal{X}^{0}_{\Omega}$ = ${\mathcal{T}}_{\Omega}$, $\mathcal{X}^{0}_{\bar{\Omega}}$ = mean( $\mathcal{T}_{\bar{\Omega}}$); \\
		\FOR{$k=0$ to $K$}
		\STATE $ \mathcal{G}^{k+1} =  \mathcal{S}_{\frac{\alpha}{L_{\mathcal{G}}^{k}}}\left(\tilde{\mathcal{G}}^{k}-\frac{1}{L_{\mathcal{G}}^{k}} \nabla_{\mathcal{G}} H\left(\tilde{\mathcal{G}}^{k}\right)\right) $, \\ $ \mathbf{U}_n^{k+1} = \mathcal{D}_{\frac{(1-\alpha)\omega_n}{L^k_{\mathbf{U}_n}}}\left(\tilde{\mathbf{U}}_n^k-\frac{1}{L^k_{\mathbf{U}_n}} \nabla_{\mathbf{U}_n} H\left(\tilde{\mathbf{U}}_n^k\right)\right) $;
		\STATE Update $\mathcal{X}^{k+1} = \mathcal{T}_{\Omega} + \left(\mathcal{G}^{k+1} \times_{n=1}^{N} \mathbf{U}_{n}^{k+1}\right)_{\bar{\Omega}}$; \\
		\IF {$\Phi\left(\mathcal{G}^{k+1}, \{\mathbf{U}^{k+1}_{n}\}, \mathcal{X}^{k+1}\right)$ is increasing}
		\STATE Re-update $\tilde{\mathcal{G}}^{k+1} = \mathcal{G}^{k}$ and $\tilde{\mathbf{U}}_n^{k+1} ={\mathbf{U}}_n^{k}$, respectively;
		\ELSE 
		\STATE Re-update $\tilde{\mathcal{G}}^{k+1}$ and $\tilde{\mathbf{U}}_n^{k+1}$ using \eqref{StepG} and \eqref{StepU} respectively;
		\ENDIF
            \STATE \textbf{until} $\left\|(\mathcal{X}^{k+1}-\mathcal{T})_{\Omega }\right\|_{F}/\left\|{\mathcal{T}}_{\Omega }\right\|_{F}<1e^{-4}$ is satisfied.
            \ENDFOR	
	\end{algorithmic}
\end{algorithm}

We establish the global convergence properties of the proposed PALM algorithm for solving the LRTuckerRep optimization problem.
\begin{thm} \label{Thm1}
Let \(\Theta^k = \{\mathcal{G}^k, \{\mathbf{U}^k_n\}\}\) be the sequence generated by Algorithm~\ref{alg1}, then we ensure that $\Theta^k$ globally converges to a critical point \(\hat{\Theta} = \{\hat{\mathcal{G}}, \{\hat{\mathbf{U}}_n\}\} \). 
\end{thm}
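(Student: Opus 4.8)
The plan is to verify the hypotheses of the abstract convergence theorem for proximal alternating linearized schemes (the framework of \cite{PALM2014, BCDXu2013}), which guarantees that a bounded sequence converges globally to a single critical point once one establishes three ingredients: a sufficient decrease property, a relative-error (subgradient) bound, and the Kurdyka--\L ojasiewicz (KL) property of the objective $\Phi$ in \eqref{Alg}. First I would record the structural properties of $\Phi = \sum_{n=1}^N f_n(\mathbf{U}_n) + f_{N+1}(\mathcal{G}) + H$. Each $f_n$ (a weighted nuclear norm) and $f_{N+1}$ (the $\ell_1$ norm) is proper, lower semicontinuous, and convex, while $H$ is continuously differentiable with blockwise Lipschitz partial gradients whose constants $L_{\mathcal{G}}$ and $L_{\mathbf{U}_n}$ are those computed in Propositions~1 and~2. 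Since the nuclear norm, the $\ell_1$ norm, the trace form $\operatorname{tr}(\mathbf{U}_n^{\mathrm{T}}\mathbf{L}_n\mathbf{U}_n)$ and the Frobenius fitting term are all semianalytic (indeed semialgebraic), their sum $\Phi$ is subanalytic, so by the Remark following the KL definition $\Phi$ is a KL function; I would also note $\Phi \ge 0$ is bounded below.

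Next I would establish the sufficient decrease. Applying the descent lemma to $H$ along each block and combining it with the optimality of the proximal--shrinkage updates \eqref{Update} — where the step sizes are taken as the Lipschitz constants $L_{\mathcal{G}}^k, L_{\mathbf{U}_n}^k$ — yields a per-block decrease proportional to the squared increment. The monitoring/restart mechanism in lines 8--11 of Algorithm~\ref{alg1}, which discards the extrapolation \eqref{StepG}--\eqref{StepU} and falls back to the previous iterate whenever $\Phi$ fails to decrease, guarantees the monotone estimate $\Phi(\Theta^{k+1}) + \tfrac{\gamma}{2}\|\Theta^{k+1}-\Theta^{k}\|^2 \le \Phi(\Theta^{k})$ for some $\gamma>0$. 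Summing over $k$ then shows $\sum_k \|\Theta^{k+1}-\Theta^k\|^2 < \infty$, so the successive increments vanish and $\{\Phi(\Theta^k)\}$ converges.

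For the relative-error bound I would write the first-order optimality conditions of the two proximal subproblems and, using the Lipschitz continuity of $\nabla H$ together with the boundedness of $\{\mathbf{U}_n^k\}$ and $\{\mathcal{G}^k\}$ (enforced by the normalization/regularization, cf.\ the Remark after \eqref{Alg}), construct an element $w^{k+1}\in\partial\Phi(\Theta^{k+1})$ with $\|w^{k+1}\| \le b\,\|\Theta^{k+1}-\Theta^k\|$ for some $b>0$. With the sufficient decrease, the subgradient bound, the KL property, and boundedness of the iterates (so the limit set is nonempty and compact) in hand, the standard KL argument gives the finite-length property $\sum_k \|\Theta^{k+1}-\Theta^k\| < \infty$; hence $\Theta^k$ is Cauchy and converges to a single point $\hat\Theta$. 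Passing to the limit in $w^{k+1}\in\partial\Phi(\Theta^{k+1})$ via closedness of the limiting subdifferential then yields $0\in\partial\Phi(\hat\Theta)$, i.e.\ $\hat\Theta$ is a critical point.

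The main obstacle I anticipate is the sufficient-decrease step in the presence of the extrapolation/inertial terms \eqref{StepG}--\eqref{line}: one must show the restart rule truly restores monotone descent with a uniform modulus $\gamma$ independent of $k$, which in turn requires uniform lower and upper bounds on the adaptive Lipschitz constants $L^k$. Establishing a priori boundedness of $\{\Theta^k\}$ — needed both for these constants and for compactness of the limit set — is the other delicate point, and I would secure it from the coercivity contributed by the regularizers together with the fixed observed entries $\mathcal{X}_\Omega = \mathcal{T}_\Omega$ that keep the fitting block controlled.
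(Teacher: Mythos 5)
Your proposal matches the paper's proof essentially step for step: the paper likewise establishes sufficient decrease (square-summable increments) via the descent lemma combined with the restart rule in lines 8--11 of Algorithm~1, derives the subgradient bound $\|\mathbb{P}^k\| \lesssim \|\Theta^k - \Theta^{k-1}\|$ from the first-order optimality conditions of the proximal updates, and invokes the KL property of $\Phi$ (nuclear norm piecewise analytic, $\ell_1$ and the polynomial $H$ semi-algebraic) to upgrade subsequence convergence to global convergence, appealing to the same framework \cite{BCDXu2013, PALM2014}. The delicate points you flag --- uniform control of the adaptive Lipschitz constants and a priori boundedness of $\{\Theta^k\}$ --- are handled in the paper only by a remark proposing explicit box constraints on $\mathcal{G}$ and $\mathbf{U}_n$, so your treatment of these issues is, if anything, more explicit than the paper's.
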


\begin{proof}
	\noindent \textbf{1) Square summable:} We express \eqref{PALM} as $\Phi(\Theta), \Theta = \{\mathcal{G}, \{\mathbf{U}_n\}\}$. The proximal linear updating rule indicates 
	\begin{equation*}
		\hat{\Theta} = \underset{\Theta}{\operatorname{argmin}} \ \left\langle\nabla_{\Theta} H(\tilde{\Theta}), \Theta -\tilde{\Theta} \right\rangle +\frac{L_{\Theta}}{2}\|\Theta -\tilde{\Theta}\|_{\mathrm{F}}^{2} + f(\Theta), \label{A1}
	\end{equation*}
	where $\tilde{\Theta}$ is the extrapolation point given by \eqref{StepG} and \eqref{line}. For any $\Theta^k = \{\mathcal{G}^k, \{\mathbf{U}^k_n\}\}$, it is worth noting that Algorithm~\ref{alg1} takes the bounded $L_{\Theta}^{k-1}$ as the Lipschitz constant of $\nabla_{\Theta} H({\Theta^k})$, then \eqref{A2} is satisfied, i.e., 
\begin{equation}
    \begin{aligned}
		H(\Theta^{k}) & \leq \ H(\Theta^{k-1}) + \left\langle\nabla_{\Theta} H(\Theta^{k-1}), \Theta^{k} -\Theta^{k-1}\right\rangle \\ & \quad + \frac{L^{k-1}_{\Theta}}{2}\|\Theta^{k} -\Theta^{k-1}\|_{\mathrm{F}}^{2}.
    \end{aligned}\label{A2}
\end{equation}
	Since the function $f$ is convex and the $H$ is strongly convex for other variables are fixed, \eqref{A2} ensures the proximal inequality \eqref{A3} holds \cite{PALM2014}. 
	\begin{equation}
		\Phi(\Theta) - \Phi(\hat{\Theta}) \geq \ \frac{L_{\Theta}}{2}\|\hat{\Theta} - \tilde{\Theta}\|_{\mathrm{F}}^{2} + L_{\Theta} \left\langle \tilde{\Theta} - \Theta, \hat{\Theta} - \tilde{\Theta} \right\rangle. \label{A3}
	\end{equation}
	Based on the results given by Proposition 1 and Proposition 2, $\nabla_{\Theta} H(\Theta)$ has bounded Lipschitz constant. Then for three successive $ \Theta^{k-2}, \Theta^{k-1}, $ $\Theta^{k}$ given by the updated step \eqref{StepG} and \eqref{StepU}, we have
	\begin{equation*}
		\begin{aligned}
			& \Phi(\Theta^{k-1}) - \Phi({\Theta}^k) \\ & \geq \ \frac{L_{\Theta}^{k-1}}{2}\|\Theta^k -\tilde{\Theta}^{k-1}\|_{\mathrm{F}}^{2} + L_{\Theta}^{k-1} \left\langle \tilde{\Theta}^{k-1} - \Theta^{k-1}, \Theta^k  - \tilde{\Theta}^{k-1} \right\rangle \\
			& \geq \ \frac{L_{\Theta}^{k-1}}{2} \|\Theta^{k-1} - \Theta^{k}\|_{\mathrm{F}}^{2} - \frac{L_{\Theta}^{k-2} \delta_{\omega}^2}{2} \|\Theta^{k-2} - \Theta^{k-1}\|_{\mathrm{F}}^{2}, \ \delta_{\omega} < 1.
		\end{aligned}
	\end{equation*}
	If \(\Phi\) increases (Algorithm~\ref{alg1} step 8), resetting \(\tilde{\Theta}^k = \Theta^{k-1}\) guarantees objective decrease. Summing the above inequality over $k$ from 1 to $K$, we have 
	\begin{equation*}
		\Phi(\Theta^{0}) - \Phi({\Theta}^K) \geq \sum_{k = 1}^{K}  \text{const.} \ \|\Theta^{k-1} - \Theta^{k}\|_{\mathrm{F}}^{2}.
	\end{equation*}
	Letting $K \to \infty $ and observing $\Phi$ is lower bounded (due to the regularization terms and data fidelity), then
	\begin{equation*}
		\sum_{k=1}^{\infty}\left\|\Theta^{k-1}-\Theta^{k}\right\|_{\mathrm{F}}^2 < \infty,
	\end{equation*}
	i.e., limit points of sequence $\{\Theta^{k}\}$ exist.
	
	\noindent \textbf{2) Subsequence convergence:} We set $\hat{\Theta}$ as a limit point of $\{\Theta^{k}\}$ depending on the square summable property. At the $k$th iteration of Algorithm~\ref{alg1}, we perform a re-update when $\Phi\left(\Theta_{k}\right) <  \Phi\left(\Theta_{k-1}\right)$, which assures the objective $\Phi$ non-increasing (we can also verify that using the results given by Lemma 3 in \cite{PALM2014}). To generate a stationary point $\hat{\Theta}$, we need to ensure the subgradient has a lower bound. For the given $\tilde{\mathcal{G}}, \{\tilde{\mathbf{U}}_n\}$, we have
	\begin{equation*}
		\hat{\mathcal{G}} = \ \underset{\mathcal{G}}{\operatorname{argmin}} \ \left\langle\nabla_{\mathcal{G}} H(\tilde{\mathcal{G}}, \{\hat{\mathbf{U}}_n\}), \mathcal{G} -\tilde{\mathcal{G}} \right\rangle  +\frac{\tilde{L}_{\mathcal{G}}}{2}\|\mathcal{G} -\tilde{\mathcal{G}}\|_{\mathrm{F}}^{2} + \alpha\|\mathcal{G}\|_{1}.
	\end{equation*}
	Letting $\tilde{\mathcal{G}} \to \hat{\mathcal{G}}$, we obtain $\tilde{L}_{\mathcal{G}} \to \hat{L}_{\mathcal{G}}$. Furthermore, the optimality condition holds for some $\mathbb{P}_{\mathcal{G}} \in \partial \|\mathcal{G}\|_{1}$
	\begin{equation}
		\nabla_{\mathcal{G}} H(\tilde{\mathcal{G}}) + \alpha \mathbb{P}_{\hat{\mathcal{G}}} = \hat{L}_{\mathcal{G}} \left(\tilde{\mathcal{G}} - \hat{\mathcal{G}}\right), \label{FOC1}
	\end{equation}
	Similarly, we have for all $\mathbf{U}_n$ that
	\begin{equation}
		\nabla_{\mathbf{U}_n} H(\tilde{\mathbf{U}}_n) + (1-\alpha) \omega_n \mathbb{P}_{\hat{\mathbf{U}}_n} = \hat{L}_{\mathbf{U}_n} \left(\tilde{\mathbf{U}}_n - \hat{\mathbf{U}}_n\right). \label{FOC2}
	\end{equation}
	Since $\nabla H$ is Lipschitz continuous and $\Theta^k$ is bounded, we have the subgradient lower bound holds for all $k=1, 2, \ldots, K$.
	\begin{equation*}
		\text{const.} \left\|\mathbb{P}^{k}\right\| \leq \left\|\Theta^{k}-\Theta^{k-1}\right\|, \quad \mathbb{P}^{k} \in \partial H\left(\Theta^{k}\right).
	\end{equation*}
	Referred by Lemma 5 in \cite{PALM2014}, the limit points set of the sequence $\Theta^{k}$ is compact. Then, the subsequence $\Theta^k$ converges to critical point $\hat{\Theta}$ \cite{PALM2014}. 
	
	\noindent \textbf{3) Global convergence via KL property:} The nuclear norm function $\{f_{n}\}$ is the piecewise analytic function, and $f_{N+1}$ is a semi-algebraic function since it is the finite sum of absolute functions. $H$ is a real polynomial function, hence semi-algebraic. Since analytic and semi-algebraic functions are KL functions \cite{KL2013}, we can ensure that the summation of KL functions $\Phi$ is the KL function. That is, $\Phi$ satisfies the KL property at $\hat{\Theta}$. The idea of our proof is to employ the KL property of $\Phi$ to show 
	\begin{equation}
		\|\Phi(\Theta)-\Phi(\hat{\Theta})\|^{\eta} \leq \mu \cdot \operatorname{dist}(\mathbf{0}, \partial \Phi(\Theta)), \text { for \ all } \Theta \in \mathcal{B}(\hat{\Theta}, \rho). \label{PALM-KL}
	\end{equation}
    where \(\rho, \mu > 0\), \(\eta \in [0,1)\) and
	\begin{equation*}
		\mathcal{B}(\hat{\Theta}, \rho) = \left\{\Theta:\|\Theta - \hat{\Theta}\|_{\mathrm{F}} \leq \rho\right\}.
	\end{equation*}
    Assume that $\Theta^k \in \mathcal{B}(\hat{\Theta}, \rho)$ for $0<k<K$, we verify that 
	\begin{equation*}
		\begin{aligned}
			\|\Theta^{K+1} - \hat{\Theta}\|_{\mathrm{F}} \leq & \ \|\Theta^{K} - \Theta^{K+1}\|_{\mathrm{F}} \\ & + \sum_{k=2}^{K-1} \left(\|\Theta^{K} - \Theta^{K+1}\|_{\mathrm{F}} + \|\Theta^{2} - \hat{\Theta}\|_{\mathrm{F}}\right) \\
			\leq & \ T \left(\Phi(\Theta^0)^{1-\theta} \right) + \|\Theta^{2} - \hat{\Theta}\|_{\mathrm{F}} < \rho,
		\end{aligned}
	\end{equation*}
	where $T$ is maximum value given by the bounded sequence $ \Theta^k $ \cite{SNTD2015}. Hence, $\Theta^{K+1} \in \mathcal{B}(\hat{\Theta}, \rho)$. Then, we ensure that \eqref{PALM-KL} holds using the induction method.
	
	Combining the subsequence convergence and \eqref{PALM-KL} holds, $\Theta^k$ converges globally to a critical point $\hat{\Theta}$ \cite{BCDXu2013, PALM2014} of the optimization problem \eqref{PALM}.
\end{proof}

\begin{remark}
    The boundedness of $\Theta^k$ ensures that both $L^k_\mathcal{G}$ and $L^k_{\mathbf{U}_n}$ are bounded (see Proposition 1 and Proposition 2). One way to make $\Theta^k$ bounded is to select $\nu>0$ and add $\mathbf{U}_{n} \leq \max \left(\nu,\|\mathcal{T}\|_{\infty}\right), \mathcal{G} \leq \max \left(\nu,\|\mathcal{T}\|_{\infty}\right)$ \cite{SNTD2015}.
\end{remark} 

\subsubsection{ProADM-based algorithm} In this section, we propose a parallel algorithm, named the proximal alternating direction multiplier (ProADM), to solve the Tucker constraint in the model \eqref{Model}. To address those multilinear constraints, we minimize the augmented Lagrange functions and incorporate a linearized proximal gradient update for each variable through the Gauss-Seidel iteration. Referred by \cite{nonconvexADMM2020, ProADMM2015}, we demonstrate that Algorithm~\ref{alg2} generates an approximate stationary point of the constrained problem \eqref{Model} given a sufficiently large penalty parameter.

We first utilize the augmented Lagrange function to obtain 
\begin{equation}
	\begin{aligned}
		& \underset{\mathcal{G},\{\mathbf{U}_{n}\},\mathcal{X},\mathcal{P}^{\mathcal{X}}, \mathcal{P}^{\Omega}, \mu}{\operatorname{min}} \mathbf{L}_{\mu}\left(\mathcal{G},\{\mathbf{U}_{n}\},\mathcal{X},\mathcal{P}^{\mathcal{X}}, \mathcal{P}^{\Omega}\right) =  \\ &  (1 - \alpha) \sum_{n=1}^{N} \omega_n \left\|\mathbf{U}_{n}\right\|_{*}+ \alpha \|\mathcal{G}\|_{1} + \sum_{n \in \Gamma} \frac{\beta_{n}}{2} \operatorname{tr}\left(\mathbf{U}_{n}^{\mathrm{T}} \mathbf{L}_n \mathbf{U}_{n}\right) \\
  & + \frac{\mu}{2} \left(\left\|\mathcal{X}-\mathcal{G} \times_{n=1}^{N} \mathbf{U}_{n}\right\|_{\mathrm{F}}^{2} + \left\|\mathcal{X}_{\Omega}-\mathcal{T}_{\Omega}\right\|_{\mathrm{F}}^{2}\right) \\
  & + \left\langle \mathcal{P}^{\mathcal{X}}, \mathcal{X}-\mathcal{G} \times_{n=1}^{N} \mathbf{U}_{n} \right\rangle + \left\langle \mathcal{P}^{\Omega}, \mathcal{X}_{\Omega}-\mathcal{T}_{\Omega} \right\rangle, 
	\end{aligned} \label{ProADM}
\end{equation}
where $\{\mu\}$ is a nondecreasing positive sequence, and $ \mathcal{P}^{\mathcal{X}}, \mathcal{P}^{\Omega} $ are the Lagrange multipliers. To find a closed-form solution, we adopt a proximal linear approximation solver to solve each subproblem to be near stationary. Specifically, we obtain the following iteration results for the current points $\mathcal{G}^{k},\{\mathbf{U}_{n}^{k}\},\mathcal{X}^{k},\mathcal{P}_{k}^{\mathcal{X}},\mathcal{P}_{k}^{\Omega}, \mu^{k}$.

\begin{itemize}
    \item Optimization of $\mathcal{G}$. With the other parameters fixed, $\mathcal{G}$ can be updated by solving $\mathbf{L}_{\mu^{k}}\left(\mathcal{G}\right)$, i.e., 
\begin{equation*}
	\begin{aligned}
		\underset{\mathcal{G}}{\operatorname{min}} & \ = \alpha \|\mathcal{G}\|_{1} + \frac{\mu^{k}}{2}\left\|\mathcal{X}^{k}+\frac{\mathcal{P}_{k}^{\mathcal{X}}}{\mu^{k}}-\mathcal{G} \times_{n=1}^{N} \mathbf{U}_{n}^{k}\right\|_{\mathrm{F}}^{2} \\ & \ = \alpha \|\mathcal{G}\|_{1} + f(\mathcal{G}). \label{G1}
	\end{aligned} 
\end{equation*}
Then, we use the quadratic approximation of $f$ to obtain the closed-form solution (the details are shown in Proposition 2).
\begin{equation}
	\mathcal{G}^{k+1} \approx \mathcal{S}_{\frac{\alpha}{ L_{\mathcal{G}}^{k}}}(\mathcal{G}^{k}-\frac{1}{L_{\mathcal{G}}^{k}} \nabla_{\mathcal{G}} f\left(\mathcal{G}^{k}\right)), \label{Gsol}
\end{equation}
where $\mathcal{G}^{k}$ is the previous iteration solution, and
\begin{equation*}
    L_{\mathcal{G}}^{k} = \mu^{k} \prod_{n=1}^{N}\left\|{{\mathbf{U}_{n}^{k}}}^\mathrm{T} \mathbf{U}_{n}^{k}\right\|_2, 
\end{equation*}
\begin{equation*}
\begin{aligned}
\nabla_{\mathcal{G}} f(\mathcal{G}^{k}) = & \ \mu^{k} \mathcal{G}^{k} \times_{n=1}^{N} {{\mathbf{U}_{n}^{k}}}^\mathrm{T}\mathbf{U}_{n}^{k} \\ & - \left(\mu^{k} \mathcal{X}^{k}+\mathcal{P}_{k}^{\mathcal{X}}\right) \times_{n=1}^{N} {{\mathbf{U}_{n}^{k}}}^\mathrm{T}.
 \end{aligned}
\end{equation*}

\item Optimization of $\mathbf{U}_n$. Guided by the results given by Proposition 3, we have the sub-Lagrange function concerning $\mathbf{U}_{p, p\neq n}^{k}, {\mathcal{G}}^{k+1}, {\mathcal{X}}^{k}$.
\begin{equation*}
	\underset{\mathbf{U}_{n}}{\operatorname{min}} \ = (1 - \alpha)\omega_n \left\|\mathbf{U}_{n}\right\|_{*} + f(\mathbf{U}_n).\label{U1}
\end{equation*}
Similarity, we use the local quadratic approximation
\begin{equation*}
\begin{aligned}
	f(\mathbf{U}_n) &= \frac{\beta_{n}}{2} \operatorname{tr}\left(\mathbf{U}_{n}^{\mathrm{T}} \mathbf{L}_n \mathbf{U}_{n}\right) \\ & \quad + \frac{\mu^{k}}{2} \operatorname{tr}\left(\mathbf{U}_{n} \mathbf{G}_{(n)}^{k+1} {\mathbf{V}_{n}^{\mathrm{T}}}^{k} \mathbf{V}_{n}^{k} {\mathbf{G}_{(n)}^{k+1}}^{\mathrm{T}} \mathbf{U}_{n}^{\mathrm{T}} \right) \\
	& \quad - \operatorname{tr}\left(\mathbf{U}_{n}  \left(\mu^{k}\mathbf{X}_{(n)}^{k}+\mathbf{P}_{k}^{\mathbf{X}_{(n)}}\right) \mathbf{V}_{n}^{k} {\mathbf{G}_{(n)}^{k+1}}^{\mathrm{T}} \right),
\end{aligned}
\end{equation*} 
to obtain approximation solution $\mathbf{U}_n^{k+1}$.
\begin{equation}
	\mathbf{U}_n^{k+1} \approx \ \mathcal{D}_{\frac{(1-\alpha)\omega_n}{L_{\mathbf{U}_n}^{k}}}(\mathbf{U}_n^{k}-\frac{1}{L_{\mathbf{U}_n}^{k}} \nabla_{\mathbf{U}_n} f\left({\mathbf{U}}_n^{k}\right)),  \label{Usol}
\end{equation}
with $L_{\mathbf{U}_{n}}^{k} = \left\|\mu^{k} \mathbf{G}_{(n)}^{k+1} {\mathbf{V}_{n}^{\mathrm{T}}}^{k} \mathbf{V}_{n}^{k} {\mathbf{G}_{(n)}^{k+1}}^{\mathrm{T}} \right\|_{2} + \beta_{n}\left\|{\mathbf{L}_n}\right\|_{2}$ and 
\begin{equation*}
\begin{aligned}
\nabla_{\mathbf{U}_{n}} f(\mathbf{U}_{n}) & = 
\mu^{k} {\mathbf{U}_{n}} \mathbf{G}_{(n)}^{k+1} {\mathbf{V}_{n}^{\mathrm{T}}}^{k} \mathbf{V}_{n}^{k} {\mathbf{G}_{(n)}^{k+1}}^{\mathrm{T}} + \beta_{n} \mathbf{L}_{n}{\mathbf{U}_{n}} \\ & \quad - \left(\mu^{k} \mathbf{X}_{(n)}^{k} + \mathbf{P}_{k}^{\mathbf{X}_{(n)}}\right) {\mathbf{V}}_n^{k}{\mathbf{G}_{(n)}^{k+1}}^{\mathrm{T}}.
\end{aligned}
\end{equation*}

\item Optimization of $\mathcal{X}$. We derive 
\begin{equation}
\begin{aligned}
&\mathcal{X}_{\Omega}^{k+1} = \frac{1}{2} \left(\mathcal{G}^{k+1} \times_{n=1}^{N} \mathbf{U}_{n}^{k+1} - \frac{\mathcal{P}_{k}^{\mathcal{X}}}{\mu_{k}}
+ \mathcal{T} - \frac{\mathcal{P}^{\Omega}_{k}}{\mu_{k}} \right)_{\Omega} , \\
&\mathcal{X}_{\bar{\Omega}}^{k+1} = \left(\mathcal{G}^{k+1} \times_{n=1}^{N} \mathbf{U}_{n}^{k+1} - \frac{\mathcal{P}_{k}^{\mathcal{X}}}{\mu_{k}} \right)_{{\bar{\Omega}}}. \label{Xsol}
\end{aligned}
\end{equation}

\item Updating the multipliers $\mathcal{P}^{\mathcal{X}}$ and $\mathcal{P}^{\Omega}$. 
\begin{equation}
\begin{aligned}
	& \mathcal{P}^{\mathcal{X}}_{k+1} = \ \mathcal{P}^{\mathcal{X}}_{k} + \mu^{k} \left(\mathcal{X}^{k+1} - \mathcal{G}^{k+1} \times_{n=1}^{N} \mathbf{U}_{n}^{k+1}\right), \\
       & \mathcal{P}^{\Omega}_{k+1} = \ \mathcal{P}^{\Omega}_{k} + \mu^{k} \left(\mathcal{X}_{\Omega}^{k+1} - \mathcal{T}_{\Omega}\right),\\
       & \mu^{k+1}= \rho \mu^{k}, \ \rho \in [1.1,1.2]. \label{M}
\end{aligned}
\end{equation}

\end{itemize}

The implementation process of the ProADM-based algorithm is summarized in Algorithm~\ref{alg2}. 
\begin{algorithm}[!ht]
	\caption{ProADM-based solver for the LRTuckerRep } \label{alg2} 
	\begin{algorithmic}[1]
		\STATE \textbf{Input}: Incomplete tensor $\mathcal{T}$, observed entries $\Omega$.\\
		\STATE \textbf{Output}: Completion result $\hat{\mathcal{X}}$.
		\STATE Initialize $\mathcal{G}^0, \{\mathbf{U}^0_{n}\} $ ($ 1 \leq n \leq N $) randomly, $0 < \alpha < 1$, $\mu = 1e^{-2}$, $K=500$;\\
		\STATE $\mathcal{X}^{0}_{\Omega}$ = ${\mathcal{T}}_{\Omega}$, $\mathcal{X}^{0}_{\bar{\Omega}}$ = mean( $\mathcal{T}_{\bar{\Omega}}$); \\
		\FOR{$k=1$ to $K$}
		\STATE Optimize $ \mathcal{G}^{k+1} $ via \eqref{Gsol} with other variables fixed;
		\STATE Optimize all $ \mathbf{U}_{n}^{k+1} $ via \eqref{Usol} with other variables fixed;
		\STATE Optimize $ \mathcal{X}^{k+1} $ via \eqref{Xsol} with other variables fixed; \\
		\STATE Update multipliers using \eqref{M}
		\STATE \textbf{until} \eqref{Stop} are satisfied.
		\ENDFOR
		\STATE \textbf{return} $ \hat{\mathcal{X}}_{\bar{\Omega}} = \mathcal{X}^{K}_{\bar{\Omega}} $, $\hat{\mathcal{X}}_{\Omega} = \mathcal{T}_{\Omega}$.
	\end{algorithmic} 
\end{algorithm}

\begin{thm} \label{Thm3}
	For sufficiently large $\mu$, the sequence \(\Theta^k = \{\mathcal{G}^k, \{\mathbf{U}^k_n\}, \mathcal{X}^k, \mathcal{P}_k^{\mathcal{X}}, \) \(\mathcal{P}_k^{\Omega}\}\) produced by Algorithm~\ref{alg2} globally converges to a critical point.
\end{thm}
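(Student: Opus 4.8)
The plan is to follow the same three-stage template used in the proof of Theorem~\ref{Thm1}, now applied to the augmented Lagrangian $\mathbf{L}_{\mu^k}$ of \eqref{ProADM}: first establish a sufficient-decrease inequality for $\mathbf{L}_{\mu^k}$ along the iterates; second, combine boundedness with lower-boundedness to extract a convergent subsequence whose limit is a critical (KKT) point of \eqref{Model}; and third, invoke the KL property of $\mathbf{L}_{\mu}$ to upgrade subsequential convergence to global convergence of the full sequence $\Theta^k$. The references \cite{nonconvexADMM2020, ProADMM2015} supply the abstract framework, so the real work is to verify its hypotheses for our specific block structure.

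First I would treat the sufficient decrease. The blocks $\mathcal{G}$ and $\{\mathbf{U}_n\}$ are updated by the proximal-linearized steps \eqref{Gsol} and \eqref{Usol}; because $\alpha\|\mathcal{G}\|_1$ and $(1-\alpha)\omega_n\|\mathbf{U}_n\|_*$ are convex and the smooth parts have bounded Lipschitz gradients (Proposition~1 and Proposition~2), each such step decreases $\mathbf{L}_{\mu^k}$ by at least a constant multiple of $\|\mathcal{G}^{k+1}-\mathcal{G}^k\|_{\mathrm{F}}^2$ and $\|\mathbf{U}_n^{k+1}-\mathbf{U}_n^k\|_{\mathrm{F}}^2$. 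The block $\mathcal{X}$ is minimized exactly via \eqref{Xsol}, contributing a further nonpositive increment. The dual ascent \eqref{M} is the only step that can raise $\mathbf{L}_{\mu^k}$, and its increment equals $\tfrac{1}{\mu^k}\big(\|\mathcal{P}^{\mathcal{X}}_{k+1}-\mathcal{P}^{\mathcal{X}}_k\|_{\mathrm{F}}^2 + \|\mathcal{P}^{\Omega}_{k+1}-\mathcal{P}^{\Omega}_k\|_{\mathrm{F}}^2\big)$.

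The crux, and the main obstacle, is to dominate this dual increment by the primal decrease. Because $\mathcal{X}$ enters the constraint $\mathcal{X}=\mathcal{G}\times_{n=1}^N\mathbf{U}_n$ through the identity operator, the optimality condition of the exact $\mathcal{X}$-update \eqref{Xsol} expresses the multipliers directly in terms of the primal iterates; differencing this identity across consecutive iterations bounds $\|\mathcal{P}_{k+1}-\mathcal{P}_k\|_{\mathrm{F}}$ by $\|\mathcal{X}^{k+1}-\mathcal{X}^k\|_{\mathrm{F}}$ up to constants. Substituting this estimate shows that, once $\mu^k$ exceeds a threshold fixed by the Lipschitz constants $L_{\mathcal{G}}^k, L_{\mathbf{U}_n}^k$, the net change $\mathbf{L}_{\mu^{k}}(\Theta^{k+1}) - \mathbf{L}_{\mu^{k}}(\Theta^{k})$ is bounded above by $-c\,\big(\|\mathcal{G}^{k+1}-\mathcal{G}^k\|_{\mathrm{F}}^2 + \sum_n\|\mathbf{U}_n^{k+1}-\mathbf{U}_n^k\|_{\mathrm{F}}^2 + \|\mathcal{X}^{k+1}-\mathcal{X}^k\|_{\mathrm{F}}^2\big)$ for some $c>0$; the nondecreasing rule $\mu^{k+1}=\rho\mu^k$ in \eqref{M} guarantees the threshold is eventually met, which is precisely where the ``sufficiently large $\mu$'' hypothesis is consumed.

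Given sufficient decrease, I would argue that $\mathbf{L}_{\mu^k}$ is bounded below — the regularizers are nonnegative and the penalty and fidelity terms are coercive once the multipliers are controlled — so summing the decrease inequality over $k$ yields $\sum_k \|\Theta^{k+1}-\Theta^k\|_{\mathrm{F}}^2<\infty$; hence the successive differences vanish and, by the boundedness remark following Theorem~\ref{Thm1}, the iterates lie in a compact set. Any limit point $\hat{\Theta}$ then satisfies the stationarity conditions, namely the analogues of \eqref{FOC1}--\eqref{FOC2} together with primal feasibility $\hat{\mathcal{X}}=\hat{\mathcal{G}}\times_{n=1}^N\hat{\mathbf{U}}_n$ and $\hat{\mathcal{X}}_\Omega=\mathcal{T}_\Omega$, so it is a critical point. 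Finally, $\mathbf{L}_{\mu}$ is a KL function: the factor nuclear norms are piecewise analytic, $\|\mathcal{G}\|_1$ is semi-algebraic, and the trace, Frobenius, and inner-product terms are real polynomials, hence semi-algebraic; a finite sum of KL functions is KL \cite{KL2013}. Reusing the induction argument behind \eqref{PALM-KL}, which bounds $\sum_k\|\Theta^{k+1}-\Theta^k\|_{\mathrm{F}}$ through the KL exponent and the sufficient-decrease constant, then promotes the subsequential limit to the global limit of $\Theta^k$, completing the proof.
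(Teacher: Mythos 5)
Your proposal is correct and follows essentially the same route as the paper: the paper itself only sketches this three-stage argument (monotone decrease of the augmented Lagrangian along a bounded sequence, limit points satisfying the first-order optimality conditions, and the KL property to upgrade to global convergence) and defers all details to \cite{nonconvexADMM2020}. Your reconstruction — in particular controlling the dual increments by the primal differences through the exact $\mathcal{X}$-update, which is precisely the mechanism of the cited framework — supplies the details the paper omits rather than deviating from its strategy.
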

The convergence analysis of Algorithm~\ref{alg2} is summarized in Theorem~\ref{Thm3}. We outline the proof strategy as follows: the augmented Lagrangian function is monotonically decreasing and the ProADM-based algorithm generates a bounded sequence \(\Theta^k = \{\mathcal{G}^k, \{\mathbf{U}^k_n\}, \mathcal{X}^k, \mathcal{P}_k^{\mathcal{X}}, \mathcal{P}_k^{\Omega}\}\); every limit point \(\hat{\Theta} = \{\hat{\mathcal{G}}, \{\hat{\mathbf{U}}_n\},\) \( \hat{\mathcal{X}}, \hat{\mathcal{P}}^{\mathcal{X}}, \hat{\mathcal{P}}^{\Omega} \} \) satisfies the necessary first-order optimality conditions; the KL property to demonstrate that \(\Theta^k\) converges globally to a critical point. The detailed proof can be found in \cite{nonconvexADMM2020}.

\section{Experiments results} 
\label{sec: experiments}
\subsection{Experimental setup}
In this section, we compare the performance of our proposal and baselines using multi-dimensional data under random missing (RM) scenarios. For the image data, we consider the mean signal-to-noise ratio (MPSNR) and mean structural similarity (MSSIM) of all spatial images to assess the image data inpainting performance \cite{LS2023}. 
\begin{equation}
	\operatorname{MPSNR} = 10 \cdot \log_{10} \frac{\left(\mathcal{X}_{\max }\right)^{2}}{\left\|\hat{\mathcal{X}}-\mathcal{X}_{\text {true }}\right\|_{\mathrm{F}}^{2} /\left|\bar{\Omega}\right|}, \label{PSNR}
\end{equation}
where $\hat{\mathcal{X}}$ is the estimated tensor and $\mathcal{X}_{\text{true}}$ is the ground-truth tensor.
\begin{equation}
	\operatorname{MSSIM}(x, y) = \frac{{(2\mu_x \mu_y + C_1)(2\sigma_{xy} + C_2)}}{{(\mu_x^2 + \mu_y^2 + C_1)(\sigma_x^2 + \sigma_y^2 + C_2)}}, \label{SSIM}
\end{equation}
where $\mathcal{X}_{\max }$ denotes the maximum value in the $\mathcal{X}_{\text{true}}$. A more significant value indicates better performance. 

For multi-dimensional traffic data, the evaluation metrics include the mean absolute percentage error (MAPE) and normalized mean absolute error (NMAE). 
\begin{equation}
	\begin{aligned}
		\mathrm{MAPE} =\frac{1}{n} \sum_{i=1}^{n}\left|\frac{y_{i}-\hat{y}_{i}}{y_{i}}\right| \times 100, \quad
		\mathrm{NMAE} = \frac{\sum_{i=1}^{n} \left|y_{i}-\hat{y}_{i}\right|}{\sum_{i=1}^{n} \left|{y_{i}}\right|}
	\end{aligned} \label{M-N}
\end{equation}
where ${y_{i}}$ and ${\hat{y}_{i}}$ are actual values and imputed values, respectively.  The lower values indicate better results.

All experiments were performed using MATLAB 2023a on a workstation equipped with a Windows 10 64-bit operating system, an Intel(R) Xeon(R) W-2123 CPU with 3.60 GHz and 64 GB RAM. Our MATLAB codes are available on request.

\subsection{Synthetic data experiment} % 存在的必要性
The synthetic tensor was generated by multi-dimensional Gaussian functions, and its incomplete version was created by randomly removing values. We investigate the performance of the baselines with a sample ratio (SR) of 10\%.
\subsubsection{Parameter analysis}
To evaluate the model's sensitivity, we set $\alpha = [0.01, 0.05, 0.1, 0.3, 0.5, 0.7, 0.9] $ and $\lambda = [0.1, 0.5, 1, 10, 1e^2, 1e^3]$. Fig.~\ref{fig1} (left) presents heat maps of MPSNR values achieved by the ELT model on synthetic tensor data. The results indicate that the ELT model performs exceptionally well, mainly when using the parameters $\alpha=0.01$ and $\lambda=1$. In particular, the parameter $\alpha$ has a more significant impact on the final results, and we empirically set the penalty parameter $\lambda=1$ for all experiments. The results presented in Fig.~\ref{fig1} (right) demonstrate that the initial strategy effectively reduces errors compared to other randomly initialized approaches. 

\begin{figure}[htbp]
	\centering
	\includegraphics[width=1\linewidth]{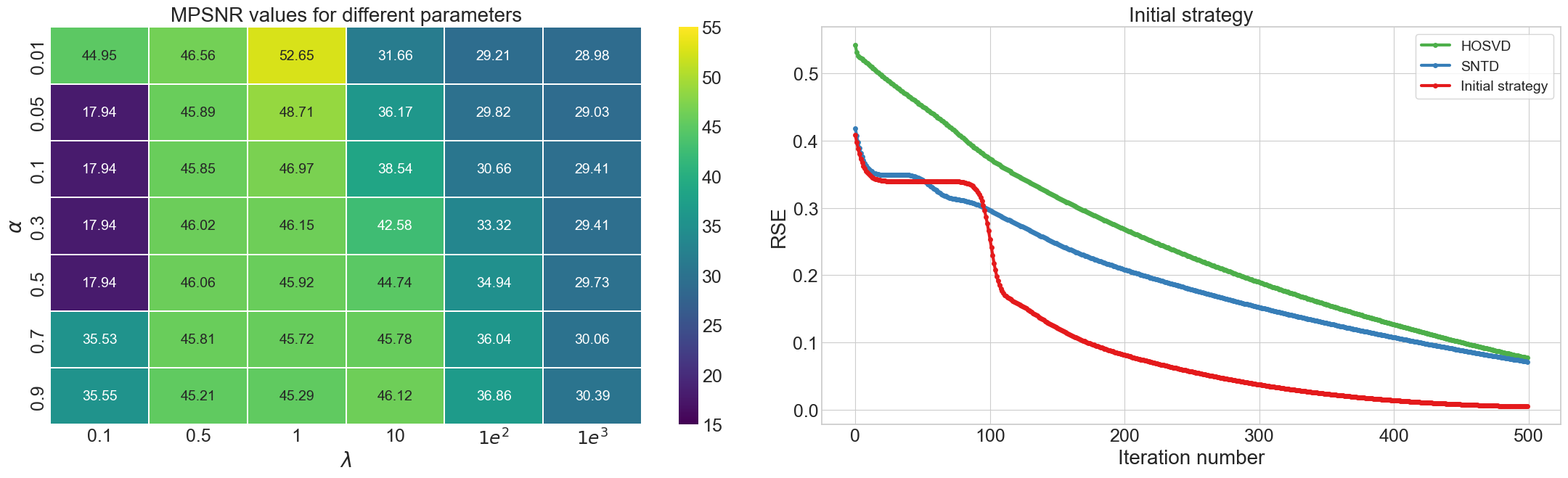}
	\caption{Parameter analysis for PALM algorithm}          \label{fig1}	
\end{figure}
\subsubsection{Convergence}
The PALM- and ProADM-based algorithms have theoretical convergence results, as presented in Theorem~\ref{Thm1} and Theorem~\ref{Thm3}. Here, we analyze the numerical convergence using the relative change error and the RSE value. Fig.~\ref{fig2} illustrates the relative error curve for the recovered tensor over two successive iterations of our proposed model about the number of iteration steps. The results indicate that the relative error values achieved by \eqref{Model} gradually converge to zero, signifying the numerical convergence of Algorithm~\ref{alg1} and Algorithm~\ref{alg2}. Furthermore, decreasing the RSE curves versus the iteration number shows that the two algorithms are numerically converged.
\begin{figure}[htbp]
	\centering
	\includegraphics[width=1\linewidth]{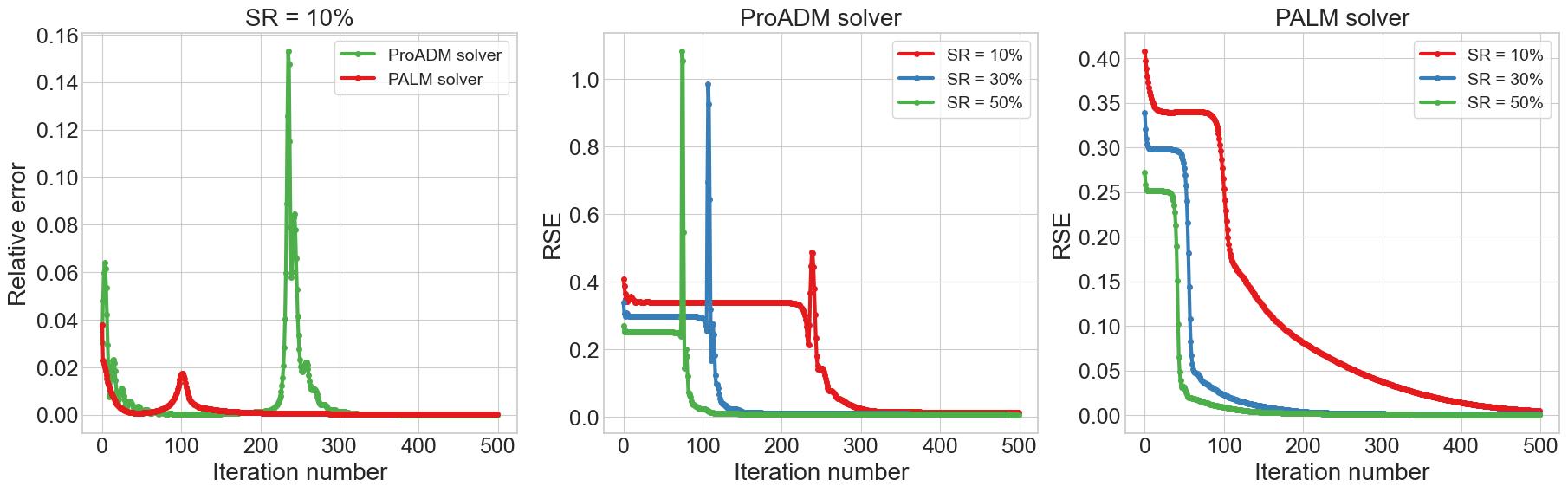} 
	\caption{Variation of relative error and root square error (RSE) values with the number of iteration steps for both PALM- and ProADM-based algorithms }	
	\label{fig2}
\end{figure}

\subsubsection{Complexity}
Suppose that ${\mathcal{X}}$ and ${\mathcal{G}}$ $\in \mathbb{R}^{I_{1} \times \ldots \times I_{N}}$, we have the basic computational complexity: the computational cost of $ {\mathbf{U}_{n}^{\mathrm{T}}\mathbf{U}_{n}} $ is $\mathcal{O}(I_{n}^3)$ and the mode-n product with the matrix ${\mathbf{U}_{n}}$ of tensor ${\mathcal{G}}$ is ${\mathcal{O}(\sum_{n=1}^N \prod_{i=1}^{n} I_{i} \prod_{j=1}^{N} I_{j} )}$. Furthermore, we reformulate the Kronecker product in ${\mathbf{G}^{n}_{\mathbf{V}}} = {\mathbf{G}_{(n)}} \mathbf{V}_{n}^{\mathrm{T}}$, and its computational cost is $\mathcal{O}\left(\sum_{n=1}^{N}\left(\prod_{i=1}^{n} I_{i}\right)\left(\prod_{j=n}^{N} I_{j}\right)\right)$. Considering the proposed PALM-based optimization for Tucker core updating, the computation of $\nabla_\mathcal{G} H\left(\mathcal{G}\right)$ requires
\begin{equation*}
	\mathcal{O}\left(\sum_{n=1}^{N} I_{n}^{3} +\sum_{n=1}^{N} I_{n} \prod_{i=1}^{N} I_{i}+\sum_{n=1}^{N}\left(\prod_{i=1}^{n} I_{i}\right)\left(\prod_{j=n}^{N} I_{j}\right)\right).
\end{equation*}
and the computational complexity of $\nabla_{\mathbf{U}_{n}} H(\mathbf{U}_{n})$ requires
\begin{equation*}
    \mathcal{O}\left(I_{n}\left(\prod_{i=1}^{n} I_{i}\right) + I_{n}^3\right) + \mathcal{O}\left(\prod_{i=1}^{n} I_{i}\right) + \mathcal{O}\left(I^3_{n}\right) + \mathcal{O}\left(\mathbf{G}^{n}_{\mathbf{V}}\right). 
\end{equation*}
So, the per-iteration time complexity of the Algorithm~\ref{alg1} as
\begin{equation}
\begin{aligned}
		\mathcal{O} \left( \left(N + 1\right) \sum_{n=1}^{N}\left(\prod_{i=1}^{n} I_{i}\right)\left(\prod_{j=n}^{N} I_{j}\right)\right). \label{Computation}
	\end{aligned}
\end{equation}

\subsection{Multi-dimensional data completion}
\subsubsection{Image data inpainting}
This section uses the LRTuckerRep model to recover third-order image data within RM scenarios. We evaluated results using metrics such as MPSNR and MSSIM, where higher scores denote superior performance. The experimental baselines included SNNTV \cite{SNNTV2017}, leveraging the nuclear norm sum; STMac \cite{STMac2016}, based on the factorization of tensor unfolding matrices; SPC \cite{SPC2016}, using CP decomposition; SparsityTD \cite{ARTD2023}, based on tensor sparsity; tCTV \cite{LS2023}, using tubal rank; ESP \cite{ESP2020} and LRSETD \cite{LRSETD2024}, both employing Tucker decomposition. The hyperparameters of the baselines were manually and optimally configured.

The RGB image `Stockton' is selected from the USC-SIPI datasets \footnote{\url{https://sipi.usc.edu/database/database.php}} with dimensions of $512 \times 512 \times 3$. For the RGB inpainting task, we validate that incorporating a smooth structural prior significantly improves performance. Fig.~\ref{fig3} (a) shows that our factor gradient depicts the image's local similarity, and the PALM-based algorithm exhibits superior accuracy. Fig.~\ref{fig3} (b) illustrates that LRTuckerRep achieves higher MPSNR and MSSIM values compared to other Tucker-based baselines. Furthermore, we assess the recovery results of LRTuckerRep compared to other baselines that incorporate both low-rank and smooth priors. The results in Fig.~\ref{fig3} (c) demonstrate that our model outperforms others, particularly in scenarios with high missing rates. Moreover, Tab.~\ref{IMAGESOTA} demonstrates that our model outperforms others, particularly in scenarios with high missing rates. Fig.~\ref{fig4} visually shows that our proposal can still work in extreme missing scenarios, such as the missing cases 95\%. It is reasonable to conclude that the LRTuckerRep method is efficient based on its strong recovery performance.
\begin{figure}[htbp]
	\centering
	\includegraphics[width=1\linewidth]{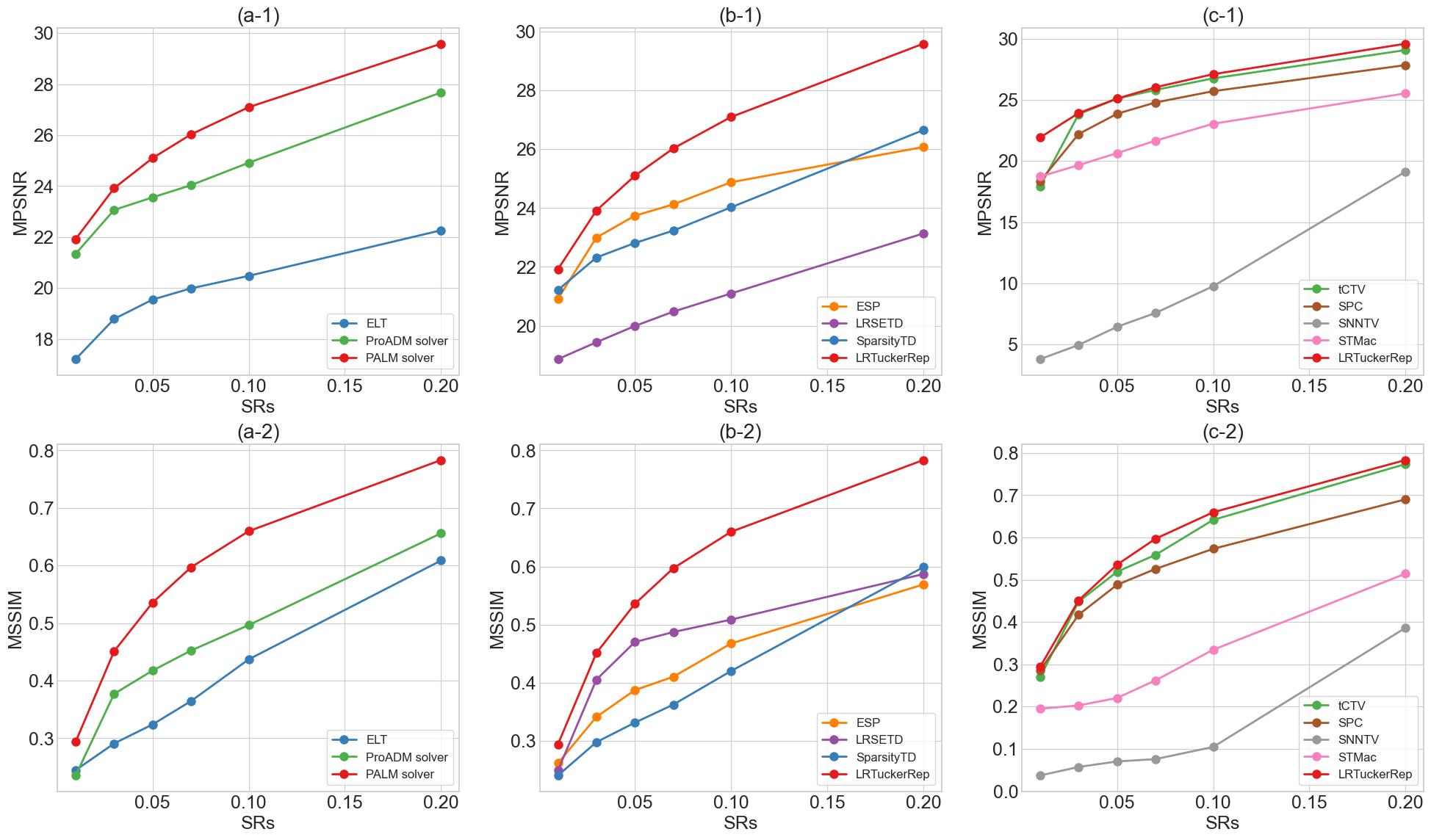}
	\caption{Comparison results of MPSNR and MSSIM values under different SRs for image restoration. (a) Smoothness enhances model performance, and the PALM-based algorithm performs better. (b) Our proposal achieves a higher MPSNR value than other Tucker-based TC models. (c) The proposed LRTuckerRep model outperforms other baselines when SRs are lower than 20\% }	
	\label{fig3}
\end{figure}

We also use MRI data \footnote{\url{http://brainweb.bic.mni.mcgill.ca/brainweb/}} with dimensions of $181 \times 217 \times 40$ for testing purposes. Given the inherent smoothness of the MRI images in all three spatial modes \cite{SPC2016}, we validate that enhanced low-rank Tucker model and factor gradients can improve the performance of tensor completion models. Fig.~\ref{fig5} shows the performance of different low-rank tensor completion methods, with visual results showing improved recovery performance with the incorporation of smoothness, and LRTuckerRep yields superior results. Furthermore, results in Fig.~\ref{fig5} (c) showcase our model outperforming others, particularly in scenarios with high missing rates. From Tab.~\ref{IMAGESOTA} and Fig.~\ref{fig4}, it is evident that LRTuckerRep successfully recovers missing MRI data, even in 95\% missing ratio, significantly outperforming other methods in terms of MPSNR and MSSIM.
\begin{figure}[htbp]
	\centering
	\includegraphics[width=1\linewidth]{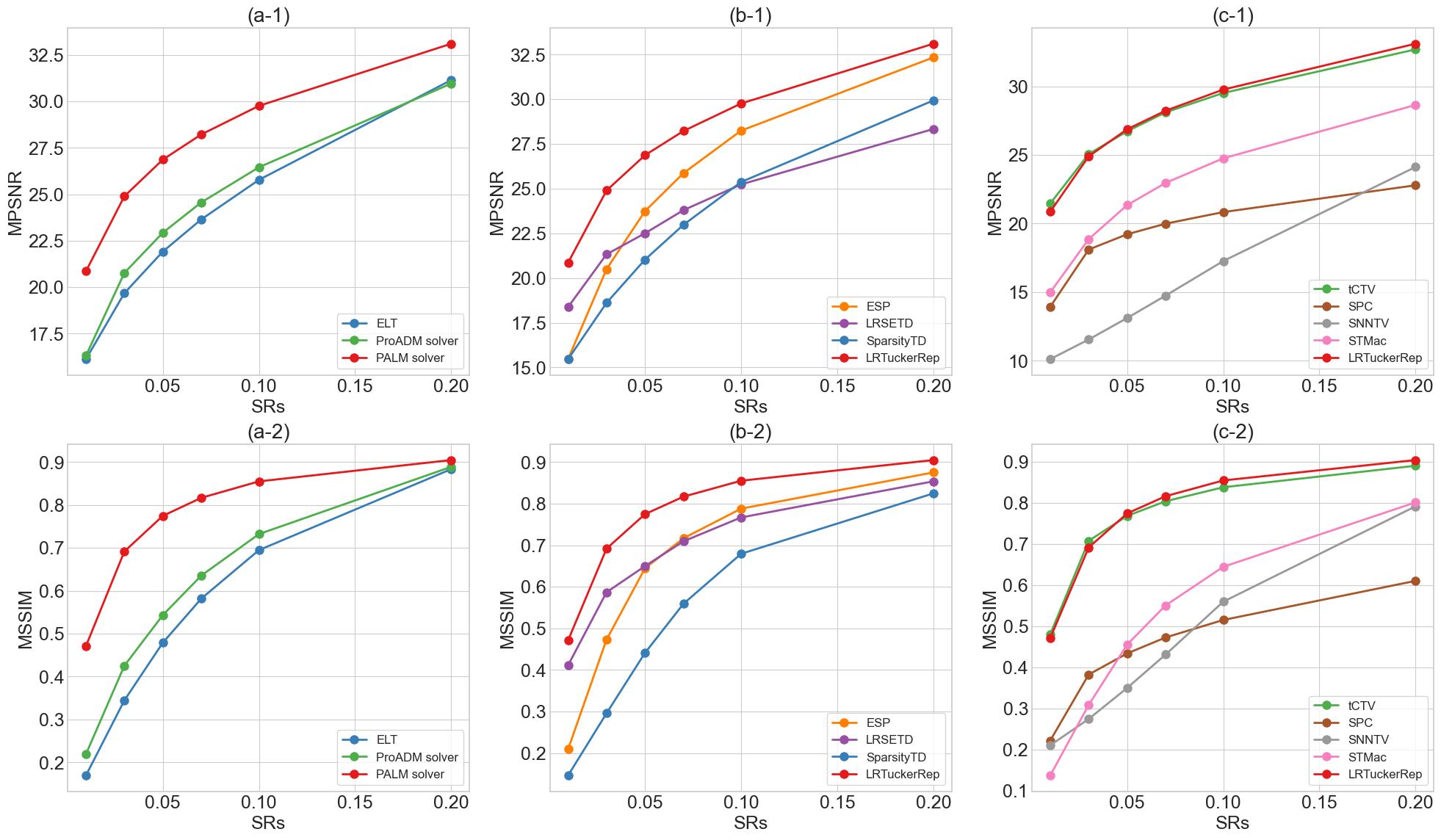}
	\caption{Comparison results of MPSNR and MSSIM values under different SRs for the MRI recovery }	
	\label{fig5}
\end{figure}

\begin{table}[!htbp]
	\centering
	\caption{MPSNR values comparison of all baseline methods on image data inpainting. The best results are highlighted in bold fonts }
	\label{IMAGESOTA}
	\begin{tabular}{lcccccc}
		\toprule 
		\multirow{2}{*}{\textbf{Methods}} & \multicolumn{2}{c}{ RGB } & \multicolumn{2}{c}{ MRI } & \multicolumn{2}{c}{ MSI }  \\
		%\cline {2 - 4}\cline {5 - 7}\cline {8 - 10} \cline {11 - 13} \cline {14 - 16}
		\cmidrule(lr){2-3} \cmidrule(lr){4-5} \cmidrule(lr){6-7}
		& 1\% & 5\% & 1\% & 5\% & 1\% & 5\%  \\
		\midrule 
		\textbf{LRTuckerRep} & \textbf{21.91} & \textbf{26.99} & 20.86 & \textbf{26.97} & \textbf{22.55} & 25.05 \\
            tCTV & 17.93 & 24.24 & \textbf{21.46} & 26.88 & 22.01 & \textbf{26.50}  \\
		SparsityTD & 21.65 & 24.15 & 18.89 & 24.93 & 21.29 & 24.78 \\
		LRSETD & 21.54 & 24.25 & 18.44& 22.76 & 21.09 & 23.45 \\
		ESP & 10.57 & 10.88 & 12.06 & 12.31 & 17.87 & 18.05 \\
		SPC & 20.10 & 24.31 & 14.61 & 21.88 & 19.99 & 23.68 \\
		SNNTV & 3.82 & 6.55 & 10.12 & 13.13 & 15.42 & 18.69  \\
		STMac & 18.72 & 20.64 & 15.05 & 21.85 & 16.84 & 20.54 \\
		\bottomrule
	\end{tabular}
\end{table}

\begin{figure*}[!htbp]
	\centering
	\includegraphics[width=1\linewidth]{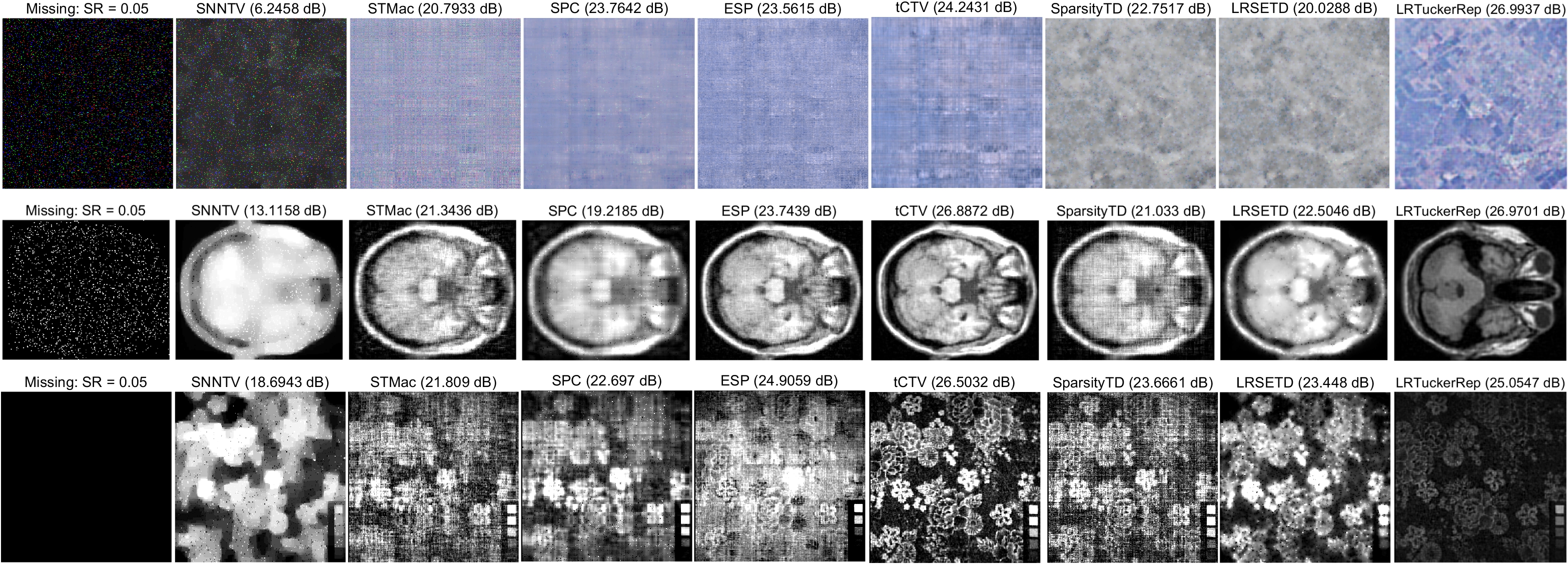}
	\caption{Visualization of image inpainting results under 95\% missing scenario }
    \label{fig4}
\end{figure*}

We further evaluate the performance of different methods using the multispectral image `Cloth' \footnote{\url{https://www.cs.columbia.edu/CAVE/databases/multispectral/}} ($256\times256\times31$). Fig.~\ref{fig6} demonstrates that incorporating smoothness improves model performance, and the MPSNR and MSSIM values of the inpainting results obtained by our proposed method surpass those of other baselines (except for the tCTV method). Fig.~\ref{fig4} also showcases a band of the test images reconstructed by the LRTuckerRep model and other baselines under extreme missing scenarios. Our proposed model produces visually superior results compared to the compared methods, with the inpainting results closest to the ground truths.
\begin{figure}[htbp]
	\centering
	\includegraphics[width=1\linewidth]{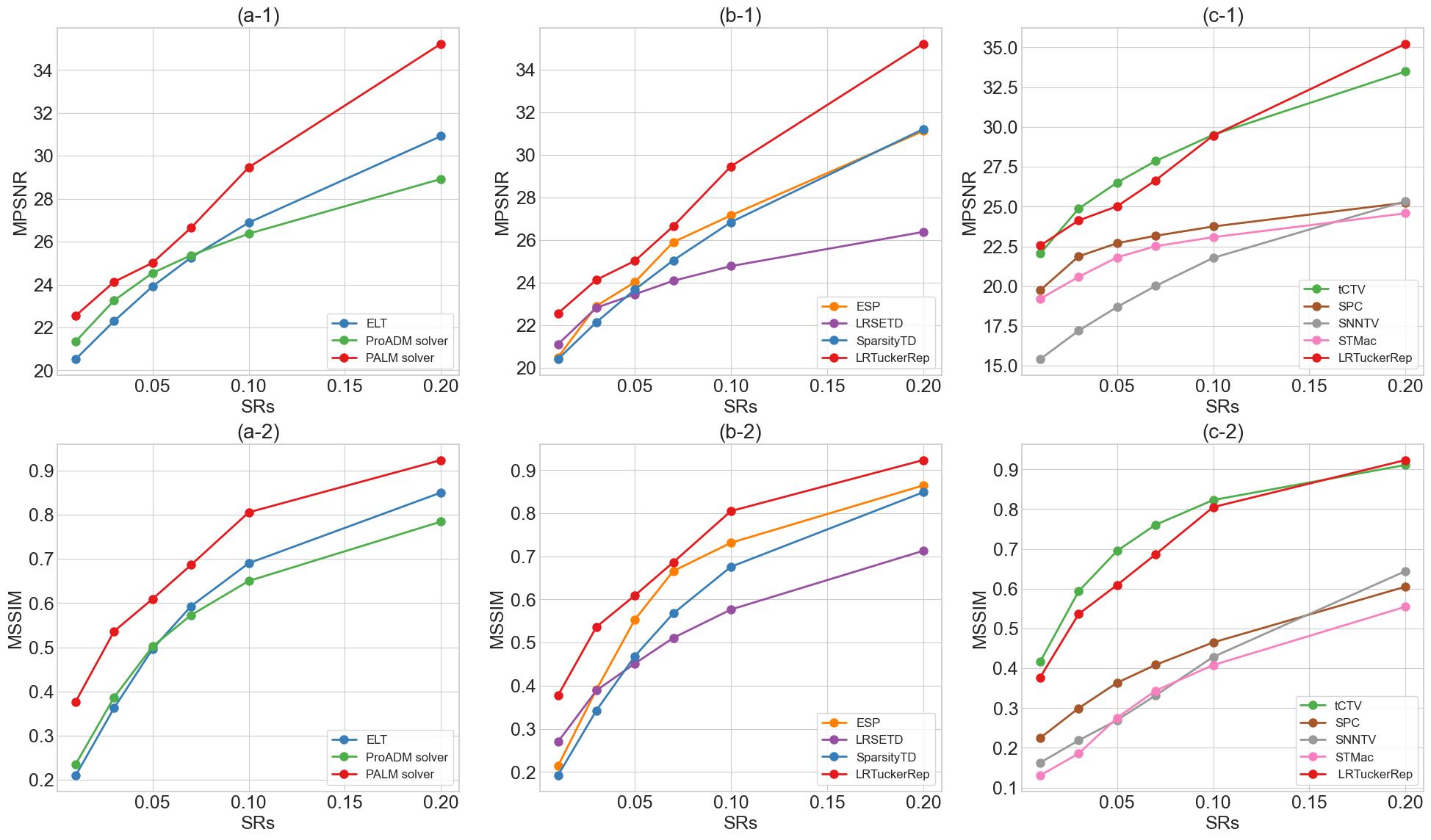}
	\caption{Comparison results of MPSNR and MSSIM values under different SRs for the MSI `Cloth' inpainting }	
	\label{fig6}
\end{figure}

\subsubsection{Traffic data imputation}
This section presents the experimental results on two real traffic datasets \footnote{\url{https://doi.org/10.5281/zenodo.7725126}}, namely Guangzhou urban traffic speed (\textbf{UTD}: $214\times144\times7$) and Abilene Internet traffic flow (\textbf{NTD}: $121\times 288\times 7$). To evaluate the imputation performance, we use two common metrics: MAPE and NMAE, where lower values indicate better performance. For comparison, we select seven baseline methods: the Tucker-based method \cite{LRSETD2024}, the sparsity-based method \cite{STRTD2023}, the tubal-based method \cite{TubalLSTC2021}, the TT-based method \cite{stTT2021}, the SNN-based method \cite{LATC2021}, the CP-based method \cite{SPC2016}, and the matrix-based method \cite{STMac2016}. These methods are chosen to demonstrate the robustness and efficiency of the proposed LRTuckerRep model.

To assess the efficacy of factor gradients for traffic data imputation, we explore their impact within the LRTuckerRep model. The results shown in Fig.\ref{fig7} illustrate the imputation performance on the UTD and NTD datasets. Notably, we observe that utilizing factor gradients along the first two tensor modes helps exploit the inherent spatiotemporal properties of traffic data, reducing imputation errors for both datasets. Compared to NTD data, the LRTuckerRep model performs better in capturing spatiotemporal traffic patterns in the UTD dataset. This indicates that traffic data collected from roadway sensors have more interpretable spatiotemporal features, making the factor gradients more effective in capturing spatial and temporal patterns. Furthermore, we evaluate the performance of our proposed algorithms, as outlined in Algorithm \ref{alg1} and Algorithm~\ref{alg2}. As shown in Fig.~\ref{fig7} (a-1, b-1), the PALM-based algorithm exhibits superior accuracy but requires more computational time.
\begin{figure}[!htbp]
	\centering
	\includegraphics[width=1\linewidth]{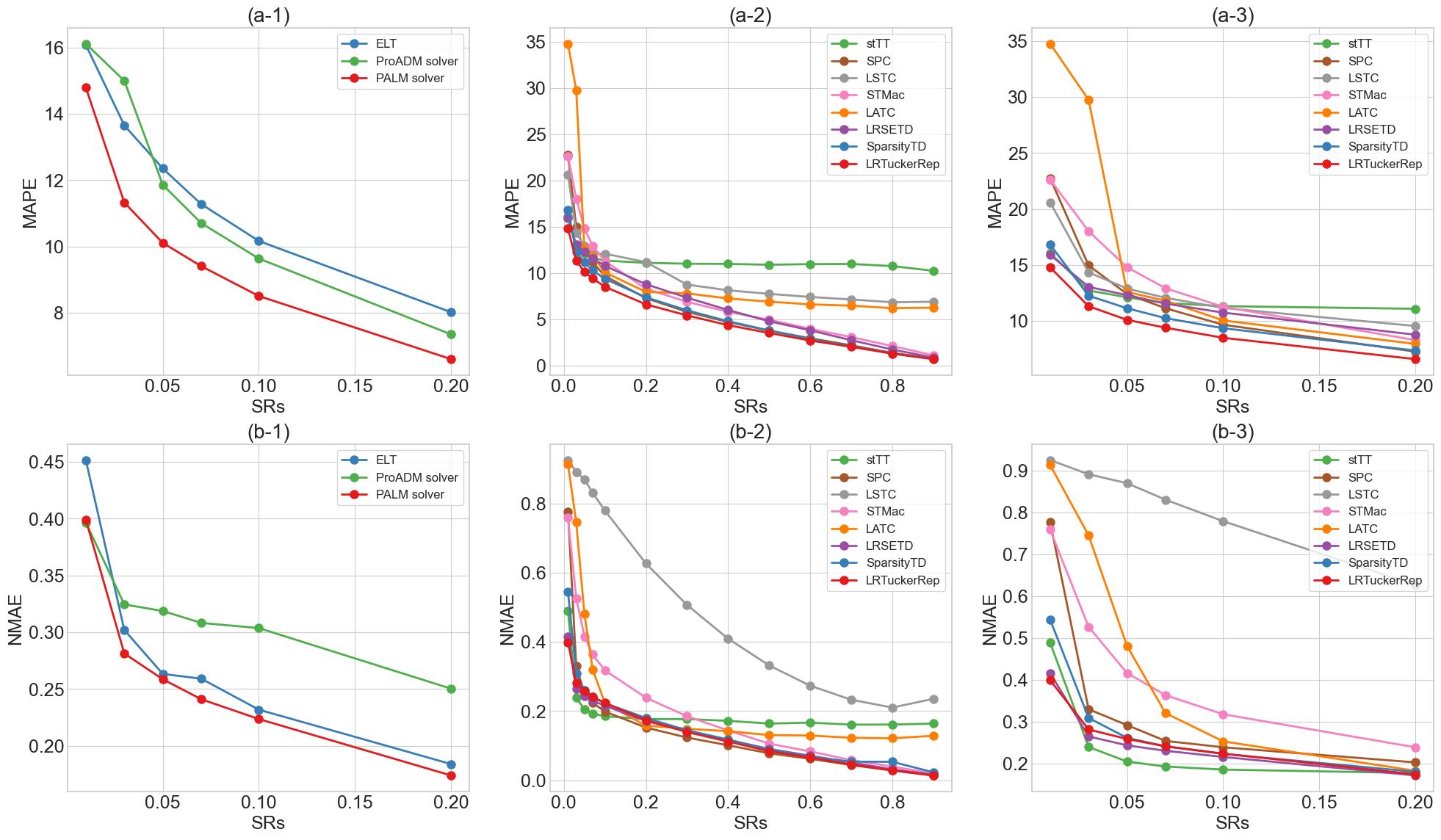}
	\caption{MAPE and NMAE values for different SRs under RM scenarios for datasets UTD (a) and NTD (b) }	
	\label{fig7}
\end{figure}

Fig.\ref{fig7} (a-2, b-2) presents the MAPE and NMAE values for the imputation of UTD and NTD using the seven baseline methods, evaluated under different sample ratios. The results demonstrate LRTuckerRep's competitive performance. Notably, LRTuckerRep stands out, achieving comparable accuracy even when the sample ratio is below 20\% (see Fig.\ref{fig7} (a-3, b-3)). The results highlight the robustness of the LRTuckerRep method, which continues to perform well even when a large proportion of the data is missing. In conclusion, LRTuckerRep is highly effective for imputing traffic data. It demonstrates its strong generalization ability and outperforms other baseline methods, even in extreme scenarios where 95\% of the data observations are missing. Through these experiments, we demonstrate that the proposed LRTuckerRep method significantly improves traffic data imputation tasks, making it an essential tool for applications requiring accurate and reliable imputation techniques in the presence of missing data.

\section{Conclusion}
\label{sec: conclusions}
This paper presents a Tucker-based prior modeling framework in tensor completion and proposes a novel low-rank Tucker Representation (LRTuckerRep) model. The proposed LRTuckerRep integrates low rankness and smoothness priors within a nonconvex least squares formulation for \( L_2 \) error minimization. The key innovation lies in a novel low-rank representation via Tucker core sparsity, combined with nuclear norm regularization on factor matrices, which avoids costly SVD operations on unfolding matrices. Simultaneously, a parameter-free Laplacian regularization is employed to adaptively capture smoothness within the factor spaces, eliminating the need for manual parameter tuning. Two globally convergent algorithms are developed to solve the LRTuckerRep model efficiently. Experimental results demonstrate that LRTuckerRep achieves high accuracy and strong robustness, even under extreme missing rates (e.g., 95\% missing rate).  

Future directions include reducing the computational cost of large-scale matrix operations through fast Fourier transforms \cite{Tatsuya2022} or tensor T-product decompositions \cite{tSVD2022}. Additionally, the proposed LRTuckerRep model represents a promising tool for multi-dimensional data processing tasks, such as tensor robust principal component analysis \cite{LS2023} and pattern discovery \cite{JASA2021}.

\section*{Acknowledgments}
This research was supported by the SUSTech Presidential Postdoctoral Fellowship and the China Postdoctoral Science Foundation under grant 2025M773057.

\section*{Appendix}
\subsection*{Proof of Proposition 1} 
\begin{proof}
For the vectorization form, $H\left(\mathcal{G}\right)$ is equivalent to
\begin{equation*}
    H\left(\mathcal{G}\right) = \frac{\lambda}{2} \left\|\left(\otimes_{n=N}^{1} \mathbf{U}_{n}^{\mathrm{T}} \mathbf{U}_{n}\right) \operatorname{vec}(\mathcal{G}) - \operatorname{vec}(\mathcal{X})\right\|_{\mathrm{F}}^{2}.
\end{equation*}
By utilizing the properties of the Kronecker product, we have the Hessian matrix $ \operatorname{vec}\left(\nabla^2_{\mathcal{G}} H(\mathcal{G})\right) = \lambda \otimes_{n=N}^{1} \mathbf{U}_{n}^{\mathrm{T}} \mathbf{U}_{n}$, which is positive semidefinite and ensures $H(\mathcal{G})$ is convex. We utilize the affine approximation of $H(\mathcal{G})$ and add the proximal term using a Lipschitz constant $L_{\mathcal{G}}$ step size at the extrapolated point $\tilde{\mathcal{G}}$, then its local quadratic approximation 
\begin{equation}
    \begin{aligned} 
    \hat{\mathcal{G}} \approx  & \ \underset{\mathcal{G}}{\operatorname{argmin}}\left\langle\nabla_{\mathcal{G}} H(\tilde{\mathcal{G}}), \mathcal{G}-\tilde{\mathcal{G}}\right\rangle+\frac{L_\mathcal{G}}{2}\|\mathcal{G}-\tilde{\mathcal{G}}\|_{F}^{2}+\alpha\|\mathcal{G}\|_{1} \\
    = & \ \underset{\mathcal{G}}{\operatorname{argmin}} \ \frac{L_{\mathcal{G}}}{2}\left\|\mathcal{G}-(\tilde{\mathcal{G}}-\frac{1}{L_{\mathcal{G}}} \nabla_{\mathcal{G}} H\left(\tilde{\mathcal{G}}\right))\right\|_{\mathrm{F}}^{2}  + \alpha \|\mathcal{G}\|_{1} \\
    = & \ \operatorname{prox}_{L_{\mathcal{G}}/\alpha}^{\|\cdot\|_1}(\tilde{\mathcal{G}}-\frac{1}{L_{\mathcal{G}}} \nabla_{\mathcal{G}} H\left(\tilde{\mathcal{G}}\right)) \\ 
    = & \ \mathcal{S}_{\frac{\alpha}{L_{\mathcal{G}}}}(\tilde{\mathcal{G}}-\frac{1}{L_{\mathcal{G}}} \nabla_{\mathcal{G}}H\left(\tilde{\mathcal{G}}\right)),
    \end{aligned}
\end{equation}
which finds a closed-form solution. For any given $\mathcal{G}_1$ and $\mathcal{G}_2$, we have 
\begin{equation*}
\begin{aligned}
 & \left\|\operatorname{vec}\left(\nabla_{\mathcal{G}} f_{N+1}(\mathcal{G}_1)\right) - \operatorname{vec}\left(\nabla_{\mathcal{G}} f_{N+1}(\mathcal{G}_2)\right)\right\|_{\mathrm{F}}^2 \\
	\leq & \ \lambda \left\| \otimes_{n=N}^{1} \mathbf{U}_{n}^{\mathrm{T}} \mathbf{U}_{n}\right\|_2 \left\|\operatorname{vec}(\mathcal{G}_1) - \operatorname{vec}(\mathcal{G}_2)\right\|_{\mathrm{F}}^2 \\
  = & \lambda \prod_{n=1}^{N}\left\|\mathbf{U}_{n}^{\mathrm{T}} \mathbf{U}_{n}\right\|_2
 \left\|\operatorname{vec}(\mathcal{G}_1) - \operatorname{vec}(\mathcal{G}_2)\right\|_{\mathrm{F}}^2.
\end{aligned} 
\end{equation*}   
On any bounded set $\{\mathbf{U}_{n}\}$, the Lipschitz constant is bounded and can be expressed as $L_\mathcal{G} =\lambda \prod_{n=1}^{N}\left\|\mathbf{U}_{n}^{\mathrm{T}} \mathbf{U}_{n}\right\|_2$.
\end{proof}	

\subsection*{Proof of Proposition 2} 
\begin{proof}
    We have the gradient of $ H\left(\mathbf{U}_n\right) $
\begin{equation*}
\nabla_{\mathbf{U}_{n}} H(\mathbf{U}_{n}) = \lambda \left({\mathbf{U}_{n}} {\mathbf{G}^{n}_{\mathbf{V}}} {\mathbf{G}^{n}_{\mathbf{V}}}^{\mathrm{T}} - \mathbf{X}_{(n)} {\mathbf{G}^{n}_{\mathbf{V}}}^{\mathrm{T}}\right) + \beta_{n} \mathbf{L}_n{\mathbf{U}_{n}},
\end{equation*}
where ${\mathbf{G}^{n}_{\mathbf{V}}} = {\mathbf{G}_{(n)}} \mathbf{V}_{n}^{\mathrm{T}}$. On the one hand, the Hessian matrix of $H(\mathbf{U}_{n})$ is given by $\nabla_{\mathbf{U}_{n}}^2 H(\mathbf{U}_{n}) $ $= \lambda
\mathbf{G}^{n}_{\mathbf{V}} {\mathbf{G}^{n}_{\mathbf{V}}}^{\mathrm{T}} +  \beta_{n} \mathbf{L}_{n}$. Since the functions $\mathbf{G}^{n}_{\mathbf{V}} {\mathbf{G}^{n}_{\mathbf{V}}}^{\mathrm{T}}$ and $\mathbf{L}_{n}$ are both positive and semi-definite, it shows that $H(\mathbf{U}_{n})$ is convex. The local quadratic approximation around the extrapolated point $\tilde{\mathbf{U}}_n$ ensures that
\begin{equation}
\begin{aligned}
    \hat{\mathbf{U}}_n & \approx \ \underset{{\mathbf{U}_n}}{\operatorname{argmin}} \ \left\langle\nabla_{\mathbf{U}_n} H(\tilde{\mathbf{U}}_n), \mathbf{U}_n-\tilde{\mathbf{U}}_n\right\rangle \\ & \quad +\frac{L_{\mathbf{U}_n}}{2}\|\mathbf{U}_n-\tilde{\mathbf{U}}_n\|_{F}^{2} + (1 - \alpha) \ \omega_n \left\|\mathbf{U}_{n}\right\|_{*} \\
    = & \ \underset{\mathbf{U}_{n}}{\operatorname{argmin}} \ \frac{L_{\mathbf{U}_{n}}}{2}\left\|\mathbf{U}_{n}-(\tilde{\mathbf{U}}_{n}-\frac{1}{L_{\mathbf{U}_{n}}} \nabla_{\mathbf{U}_{n}} H\left(\tilde{\mathbf{U}}_{n}\right))\right\|_{\mathrm{F}}^{2} \\ & \ + (1 - \alpha) \ \omega_n \left\|\mathbf{U}_{n}\right\|_{*}\\
    = & \ \operatorname{prox}_{\frac{L_{\mathbf{U}_n}}{(1-\alpha)\omega_n}}^{\|\cdot\|_*} (\tilde{\mathbf{U}}_n-\frac{1}{L_{\mathbf{U}_n}} \nabla_{\mathbf{U}_n} H\left(\tilde{\mathbf{U}}_n\right)) \\
    = & \ \mathcal{D}_{\frac{(1-\alpha)\omega_n}{L_{\mathbf{U}_n}}}(\tilde{\mathbf{U}}_n-\frac{1}{L_{\mathbf{U}_n}} \nabla_{\mathbf{U}_n} H\left(\tilde{\mathbf{U}}_n\right)),
\end{aligned}
\end{equation}
which gives a proximal operator. On the other hand, the Lipschitz constant of $\nabla_{\mathbf{U}_{n}} H$ can be calculated separately. For any two matrices ${\mathbf{U}_{n}^1}, {\mathbf{U}_{n}^2}$, we have 
\begin{equation*}
	\begin{array}{l}
		\left\|\nabla_{\mathbf{U}_{n}} H(\mathbf{U}_{n}^1) - \nabla_{\mathbf{U}_{n}} H(\mathbf{U}_{n}^2) \right\|_{\mathrm{F}}^{2} \\
		= \left\|\lambda\left({\mathbf{U}_{n}^1} - {\mathbf{U}_{n}^2}\right) {\mathbf{G}^{n}_{\mathbf{V}}} {\mathbf{G}^{n}_{\mathbf{V}}}^{\mathrm{T}} - \beta_{n} \mathbf{L}_{n} \left({\mathbf{U}_{n}^1} - {\mathbf{U}_{n}^2}\right) \right\|_{\mathrm{F}}^{2} \\
		\leq \left\|\lambda\left({\mathbf{U}_{n}^1} - {\mathbf{U}_{n}^2}\right) {\mathbf{G}^{n}_{\mathbf{V}}} {\mathbf{G}^{n}_{\mathbf{V}}}^{\mathrm{T}} \right\|_{\mathrm{F}}^{2} + \left\|\beta_{n} \mathbf{L}_{n} \left({\mathbf{U}_{n}^1} - {\mathbf{U}_{n}^2}\right) \right\|_{\mathrm{F}}^{2}. \\
	\end{array} 
\end{equation*}
More specifically,
\begin{equation*}
	\begin{array}{l}
	\left\|\lambda\left({\mathbf{U}_{n}^1} - {\mathbf{U}_{n}^2}\right) {\mathbf{G}^{n}_{\mathbf{V}}} {\mathbf{G}^{n}_{\mathbf{V}}}^{\mathrm{T}} \right\|_{\mathrm{F}}^{2} \\
		= \operatorname{tr}\left( \lambda{\mathbf{G}^{n}_{\mathbf{V}}} {\mathbf{G}^{n}_{\mathbf{V}}}^{\mathrm{T}} \left({\mathbf{U}_{n}^1} - {\mathbf{U}_{n}^2}\right)^{\mathrm{T}}\left({\mathbf{U}_{n}^1} - {\mathbf{U}_{n}^2}\right){\mathbf{G}^{n}_{\mathbf{V}}} {\mathbf{G}^{n}_{\mathbf{V}}}^{\mathrm{T}}\right) \\
		\leq \lambda \left\|\mathbf{G}^{n}_{\mathbf{V}} {\mathbf{G}^{n}_{\mathbf{V}}}^{\mathrm{T}}\right\|_{2}\left\|{\mathbf{U}_{n}^1} - {\mathbf{U}_{n}^2} \right\|_{\mathrm{F}}^{2},
	\end{array} 
\end{equation*}
and
\begin{equation*}
	\begin{array}{l}
        \left\|\beta_{n} \mathbf{L}_{n} \left({\mathbf{U}_{n}^1} - {\mathbf{U}_{n}^2}\right) \right\|_{\mathrm{F}}^{2} \\
		= \operatorname{tr}\left(\beta_{n} {\mathbf{L}_{n}}^{\mathrm{T}} \left({\mathbf{U}_{n}^1} - {\mathbf{U}_{n}^2}\right)^{\mathrm{T}}\left({\mathbf{U}_{n}^1} - {\mathbf{U}_{n}^2}\right)\beta_{n}{\mathbf{L}_{n}}\right) \\
		\leq \beta_{n} \left\|{\mathbf{L}_{n}}\right\|_{2} \left\|{\mathbf{U}_{n}^1} - {\mathbf{U}_{n}^2} \right\|_{\mathrm{F}}^{2}.
	\end{array} 
\end{equation*}
Since ${\left\|\mathbf{G}_{\mathbf{V}}\right\|_{2}}$ and ${\left\|\mathbf{L}_{n}\right\|_{2}}$ denote spectral norms, ${\nabla_{\mathbf{U}_{n}} H(\mathbf{U}_{n})}$ is Lipschitz continuous, and the Lipschitz constant $L_{\mathbf{U}_{n}}$ is bounded when $\{\mathbf{U}_{p}, p \neq n\}, \mathcal{G}$ is bounded.
\end{proof}

\bibliographystyle{elsarticle-num}
\bibliography{references.bib}
%% If you have bib database file and want bibtex to generate the
%% bibitems, please use
%%
%%  \bibliographystyle{elsarticle-num} 
%%  \bibliography{<your bibdatabase>}

%% else use the following coding to input the bibitems directly in the
%% TeX file.

%% Refer following link for more details about bibliography and citations.
%% https://en.wikibooks.org/wiki/LaTeX/Bibliography_Management

\end{document}